\documentclass[11pt]{article}

\usepackage{graphicx}
\usepackage[top=1.25in,bottom=1.25in,left=1.25in,right=1.25in]{geometry}
\usepackage{amsthm,amsmath,amsfonts,amscd,amssymb,bm,epsfig,epsf,bbm,amssymb,algorithm,algpseudocode,prettyref,tikz,mathtools,pgfplots,dsfont,makecell,caption,subcaption,enumerate,arydshln} 
\usepackage[toc,page]{appendix}
\usepackage{setspace}
\usepackage[T1]{fontenc}
\usepackage[utf8]{inputenc}
\usepackage{authblk}

\usepackage{natbib} 

\definecolor{darkred}{RGB}{100,0,0}
\definecolor{darkgreen}{RGB}{0,100,0}
\definecolor{darkblue}{RGB}{0,0,150}
\definecolor{red}{RGB}{255,0,0}
\usepackage{hyperref}
\hypersetup{colorlinks=true, linkcolor=darkred, citecolor=darkgreen, urlcolor=darkblue}
\usepackage{url}
\usepackage[capitalize]{cleveref}

\newtheorem{theorem}{Theorem}[section]
\newtheorem{lemma}[theorem]{Lemma}
\newtheorem{corollary}[theorem]{Corollary}
\newtheorem{proposition}[theorem]{Proposition}
\newtheorem{definition}{Definition}

\theoremstyle{remark}
\newtheorem{remark}{Remark}

\newcommand{\Cov}{\operatorname{\textrm{Cov}}}

\newcommand{\corr}{\operatorname{\textrm{corr}}}

\renewcommand{\P}{\operatorname{\mathbb{P}}}

\renewcommand{\vec}[1]{{\boldsymbol{#1}}}
\newcommand{\vct}[1]{\bm{#1}}
\newcommand{\mtx}[1]{\bm{#1}}

\newcommand{\logdet}{\operatorname{logdet}}

\newcommand{\adj}{\operatorname{adj}}

\DeclareMathOperator*{\argmax}{arg\,max}

\definecolor{xli}{RGB}{200,30,30}

\numberwithin{equation}{section}
\parskip = 0 pt
\pgfplotsset{compat=1.17}

\begin{document}

\title{Learning Linear Polytree Structural Equation Models}
\author[1]{Xingmei Lou}
\author[2]{Yu Hu}
\author[1]{Xiaodong Li}
\affil[1]{Department of Statistics, University of California, Davis,  Davis, CA 95616, USA}
\affil[2]{Department of Mathematics and Division of Life Science, 
	Hong Kong University of Science and Technology, 
	Clear Water Bay, Hong Kong S.A.R. }
\date{}       

\maketitle

\begin{abstract}
We are interested in the problem of learning the directed acyclic graph (DAG) when data are generated from a linear structural equation model (SEM) and the causal structure can be characterized by a polytree. Under the Gaussian polytree models, we study sufficient conditions on the sample sizes for the well-known Chow-Liu algorithm to exactly recover both the skeleton and the equivalence class of the polytree, which is uniquely represented by a CPDAG. On the other hand, necessary conditions on the required sample sizes for both skeleton and CPDAG recovery are also derived in terms of information-theoretic lower bounds, which match the respective sufficient conditions and thereby give a sharp characterization of the difficulty of these tasks. We also consider the problem of inverse correlation matrix estimation under the linear polytree models, and establish the estimation error bound in terms of the dimension and the total number of v-structures. We also consider an extension of group linear polytree models, in which each node represents a group of variables. Our theoretical findings are illustrated by comprehensive numerical simulations, and experiments on benchmark data also demonstrate the robustness of polytree learning when the true graphical structures can only be approximated by polytrees.
\end{abstract}

\smallskip
\noindent \textbf{Keywords.}  	polytree, linear structural equation model, equivalence class, CPDAG, Chow-Liu algorithm, minimum spanning tree, information-theoretic bound

\section{Introduction}
\label{sec:intro}

Over the past three decades, the problem of learning directed graphical models from i.i.d. observations of a multivariate distribution has received an enormous amount of attention since they provide a compact and flexible way to represent the joint distribution of the data, especially when the associated graph is a directed acyclic graph (DAG), which is a directed graph with no directed cycles. DAG models are popular in practice with applications in biology, genetics, machine learning, and causal inference \citep{sachs2005causal,zhang2013integrated,koller2009probabilistic,spirtes2000causation}. There exists extensive literature on learning the graph structure from i.i.d. observations under DAG models. For a summary, see the survey papers \cite{drton2017structure,heinze2018causal}.  Existing approaches generally fall into two categories, constraint-based methods \citep{spirtes2000causation,pearl2009causality} and score-based methods \citep{chickering2002optimal}. Constraint-based methods utilize conditional independence tests to determine whether there exists an edge between two nodes and then orient the edges in the graph, such that the resulting graph is compatible with the conditional independencies determined in the data. Score-based methods formulate the structure learning task as optimizing a score function based on the unknown graph and the data.

A polytree is a connected DAG that contains no cycles even if the directions of all edges are ignored. It is practically useful due to tractability in both structure learning and inference. To the best of our knowledge, structure learning of polytree models was originally studied in \cite{rebane1987recovery}, in which the skeleton of the polytree is estimated by applying the Chow-Liu algorithm \citep{chow1968approximating} to pairwise mutual information quantities, a method that has been widely used in the literature of Markov random field to fit undirected tree models. Polytree graphical models have received a significant amount of research interest both empirically and theoretically ever since, see, e.g., \cite{dasgupta1999learning, cheng2002learning}.

This paper aims to study sample size conditions of the method essentially proposed in \cite{rebane1987recovery} for the recovery of polytree structures by applying the Chow-Liu algorithm to pairwise sample correlations in the case of Gaussian linear structure equation models (SEM). We establish sufficient conditions on the sample sizes for consistent recovery of both the skeleton and equivalence class for the underlying polytree structure. On the other hand, we will also establish the necessary conditions on the sample sizes for these two tasks through information-theoretic lower bounds. Our sufficient and necessary conditions match in order in a broad regime of model parameters, and thereby characterize the difficulty of these two tasks in polytree learning.

A relevant line of research is structure learning for tree-structured undirected graphical models, including both discrete cases \citep{heinemann2014inferning, bresler2020learning, netrapalli2010greedy, anandkumar2012high, anandkumar2012learning} and Gaussian cases \citep{tan2010learning, tavassolipour2018learning, nikolakakis2019learning, katiyar2019robust}. In particular, conditions on the sample size for undirected tree structure learning via the Chow-Liu algorithm have been studied for both Ising and Gaussian models \citep{bresler2020learning, tavassolipour2018learning, nikolakakis2019learning}, and the analyses usually rely crucially on the so-called ``correlation decay'' property over the true undirected tree. The correlation decay properties can usually be explicitly quantified by the pairwise population correlations corresponding to the edges of the underlying true tree. Based on this result and some perturbation results of pairwise sample correlations to their population counterparts, sufficient conditions on the sample size for undirected tree recovery with the Chow-Liu algorithm can be straightforwardly obtained.

In order to apply the above technical framework to study the sample size conditions for polytree learning, a natural question is whether we have a similar correlation decay phenomenon for the polytree models. In fact, this is suggested in the seminal paper \cite{rebane1987recovery}. To be concrete, under some non-degeneracy assumptions, it has been shown in \cite{rebane1987recovery} that there holds the ``mutual information decay'' over the skeleton of the underlying polytree. Roughly speaking, the mutual information decay is a direct implication of the well-known ``data processing inequality'' in information theory \citep{thomas2006elements}. Restricted to the very special case of Gaussian linear SEM, the mutual information decay is indeed equivalent to the property of population correlation decay.

To obtain some meaningful sample complexity result, we need to quantify such correlation decay explicitly as what has been done in the study of the Chow-Liu algorithm for undirected tree models \citep{bresler2020learning, tavassolipour2018learning, nikolakakis2019learning}. The mutual information decay given in \cite{rebane1987recovery} holds for general polytree models, but one can expect to further quantify such decay under more specific models. In fact, if we restrict the polytree model to linear SEM, by applying the well-known Wright's formula \citep{wright1960path, nowzohour2017distributional, foygel2012half}, the population correlation decay property can be quantified by the pairwise correlations corresponding to the tree edges. With such quantification of correlation decay over the underlying polytree skeleton, we can apply the ideas from undirected tree structure learning to establish sufficient conditions on sample size for polytree skeleton recovery via the Chow-Liu algorithm. Roughly speaking, if the maximum absolute correlation coefficient over the polytree skeleton is uniformly bounded below $1$, the Chow-Liu algorithm recovers the skeleton exactly with high probability if the sample size satisfies $n > O((\log p)/\rho_{\min}^2)$, where $p$ is the number of variables and $\rho_{\min}$ is the minimum absolute population correlation coefficient over the skeleton.

To determine the directions of the polytree over the skeleton, the concept of CPDAG \citep{verma1991equivalence} captures the equivalence class of polytrees. We then consider the CPDAG recovery procedure introduced in \cite{verma1992algorithm} and \cite{meek1995causal}, which is a polynomial time algorithm based on identifying all the v-structures \citep{verma1991equivalence}. Therefore, conditional on the exact recovery of the skeleton, recovering the CPDAG is equivalent to recovering all v-structures. In a non-degenerate polytree model, a pair of adjacent edges form a v-structure if and only if the two non-adjacent node variables in this triplet are independent, so we consider a natural v-structure identification procedure by thresholding the pairwise sample correlations over all adjacent pairs of edges with some appropriate threshold. In analogy to the result of skeleton recovery, we show that the CPDAG of the polytree can be exactly recovered with high probability if the sample size satisfies $n > O((\log p)/\rho_{\min}^4)$. Furthermore, by using Fano's method, we show that $n > O((\log p)/\rho_{\min}^2)$ is necessary for skeleton recovery, while $n > O((\log p)/\rho_{\min}^4)$ is necessary for CPDAG recovery. This means that we have sharply characterized the difficulties for the two tasks.

The paper is organized as follows: In Section \ref{sec:model_method}, we review the concepts of linear polytree SEM, Markov equivalence and CPDAG, and the polytree learning method based on the Chow-Liu algorithm. In Section \ref{sec:theory_gaussian}, we give optimal sample size conditions for both the skeleton and CPDAG recovery, particularly in terms of the minimum correlation over the tree skeleton. In Section \ref{sec:PC_adapted_polytree}, we introduce a version of PC algorithm adapted to the linear polytree models, and establish the same sample size conditions. In Section \ref{sec:theory_nongaussian}, we discuss a method of estimating the inverse correlation matrix for linear polytree models, and establish an upper bound of estimation in the entry-wise $\ell_1$ norm. Our theoretical findings are empirically demonstrated in Section \ref{sec:experiment}, along with numerical results under some benchmark simulated data in the literature of DAG learning. A brief summary of our work and some potential future research are discussed in Section \ref{sec:discussions}.

\section{Linear Polytree Models and Learning} 
\label{sec:model_method}
This section aims to give an overview of the concepts of linear polytree SEM, equivalence classes characterized by CPDAG, and the Chow-Liu algorithm for polytree learning. Most materials are not new, but we give a self-contained introduction of these important concepts and methods so that our main results introduced in the subsequent sections will be more accessible to a wider audience.

\subsection{Linear Polytree Models}
\label{subsec:polytree}
Let $G = (V,E)$ be a directed graph with vertex set $V = \{1,2,...,p\}$ and edge set $E$. We use $i \rightarrow j \in E$ to denote that there is a directed edge from node $i$ to node $j$ in $G$. A directed graph with no directed cycles is referred to as a directed acyclic graph (DAG). The parent set of node $j$ in $G$ is denoted as $Pa(j) \coloneqq \{i \in V: i \rightarrow j \in E\}$. Correspondingly, denote by $Ch(j) \coloneqq \{k: j\rightarrow k \in E\}$ the children set of $j$.

Let $\vec{x} = [X_1, \ldots, X_p]^\top$ be a random vector where each random variable $X_j$ corresponds to a node $j \in V$. The edge set $E$ usually encodes the causal relationships among the variables. The random vector $\vec{x}$ is said to be Markov on a DAG $G$ if its joint density function (or mass function) $p(\vec{x})$ can be factorized according to $G$ as $p(\vec{x}) = \prod_{j=1}^p p(x_j|x_{Pa(j)})$,
where $p(x_j|x_{Pa(j)})$ is the conditional density/probability of $X_j$ given its parents $X_{Pa(j)} \coloneqq \{X_i : i \in Pa(j) \}$. We usually refer to $(G, p(\vct{x}))$ as a DAG model.

Throughout this work, we restrict our discussion to an important sub-class of DAG models: linear structure equation models (SEM), in which the dependence of each $X_j$ on its parents is linear with additive noise. The parameterization of the linear SEM with directed graph $G=(V,E)$ would be 
\begin{equation}
\label{eq:sem0}
X_j = \sum_{i=1}^p \beta_{ij}X_i + \epsilon_j = \sum_{i \in Pa(j)} \beta_{ij}X_i + \epsilon_j, \quad \text{for~} j = 1, \ldots, p, 
\end{equation}
where 
\[
\beta_{ij} \neq 0 \text{~~~if and only if~~~} i\rightarrow j \in E, 
\]
and all $\epsilon_j$'s are independent with mean zero, usually with different variances. Let $\mtx{B} = \big[ \beta_{ij} \big] \in \mathbb{R}^{p \times p}$ and $\vec{\epsilon} = [\epsilon_1, \ldots, \epsilon_p]^\top$. Then the SEM can be represented as 
\begin{equation}
\label{eq:sem}
\vec{x} = \mtx{B}^\top\vec{x} + \vec{\epsilon}.
\end{equation}
Denote $\Cov(\vec{x}) = \mtx{\Sigma} = \big[ \sigma_{ij} \big] \in \mathbb{R}^{p \times p}$ and $\Cov(\vec{\epsilon}) = \mtx{\Omega} =  \text{Diag}(\omega_{11}, \ldots, \omega_{pp})$. Here $\mtx{\Omega}$ is diagonal since all additive noise variables are assumed to be mutually independent.

For any DAG, if we ignore the directions of all its directed edges, the resulting undirected graph is referred to as the \emph{skeleton} of the DAG. A polytree is a connected DAG whose skeleton does not possess any undirected cycles. The model \eqref{eq:sem} is referred to as a linear polytree SEM, if the underlying DAG is a polytree $T = (V,E)$. In this paper, we focus on the case of independent Gaussian noise $\epsilon_i$, so the model \eqref{eq:sem} can be also referred to as a Gaussian linear polytree SEM. A major purpose of this paper is to study the problem of polytree learning, i.e., the recovery of the equivalence class of the polytree $T=(V,E)$ under the model \eqref{eq:sem} from a finite sample of observations $\vec{x}_1, \ldots, \vec{x}_n$, or equivalently the $n\times p$ data matrix $\mtx{X} = [\vec{x}_1, \ldots, \vec{x}_n]^\top$. We explain the concept of Markov equivalence classes in the next subsection.

\subsection{Markov Equivalence and CPDAG}
Let's briefly review the concept of Markov equivalence of DAGs. 
Note that each DAG $G$ entails a list of statements of conditional independence, which are satisfied by any joint distribution Markov to $G$. Two DAGs are equivalent if they entail the same list of conditional independencies. In the present paper, the recovery of the equivalence class of DAG hinges on a well-known result given in \cite{verma1991equivalence}: Two DAGs are Markov equivalent if and only if they have the same skeleton and sets of v-structures, where a v-structure is a node triplet $i\rightarrow k \leftarrow j$ where $i$ and $j$ are non-adjacent.

An important concept to intuitively capture equivalence classes of DAGs is the completed partially DAG (CPDAG): a graph $K$ with both directed and undirected edges representing the Markov equivalence class of a DAG $G$ if: (1) $K$ and $G$ have the same skeleton; (2) $K$ contains a directed edge $i\rightarrow j$ if and only if any DAG $G'$ that is Markov equivalent to $G$ contains the same directed edge $i\rightarrow j$. The CPDAG of $G$ is denoted as $K=C_G$. It has been shown in \cite{chickering2002learning} that two DAGs have the same CPDAG if and only if they belong to the same Markov equivalence class.

The following result provides some intuition on the CPDAG for polytree models.

\begin{proposition}
\label{thm:CPDAG_polytree}
The undirected sub-graph containing undirected edges of the CPDAG of a polytree forms a forest. All equivalent DAGs can be obtained by orienting each undirected tree of the forest into a rooted tree, that is, by selecting any node as the root and setting all edges going away from it.
\end{proposition}

\begin{proof}
Each connected component of the undirected edges is a sub-graph of the polytree $G$'s skeleton, thus is a tree. If a node of the tree also has directed edges, they must be outgoing according to Line 6 of \cref{alg:cpdag} (Rule 1 in \cite{meek1995causal}). This means that when we convert each undirected tree into a rooted tree, it does not create any additional v-structures in the resulting DAG $G^\prime$. So the original CPDAG is also the CPDAG of $G^\prime$, i.e., $G^\prime$ is equivalent to $G$. On the other hand, if $G^\prime$ is an equivalent DAG, for each undirected tree $T$ in the CPDAG, let $i$ be a source node of $T$ according to $G^\prime$. Then $T$ in $G^\prime$ must be a rooted tree with $i$ being the root to avoid having v-structures within $T$ (and hence contradicting with $G^\prime$ shares the same CPDAG). This shows that all equivalent class members can be obtained by orienting undirected trees into rooted trees and completes the proof.
\end{proof}

\subsection{Polytree Learning}
\label{sec:algorithm}

The procedure of polytree learning we are considering in this paper has been in principle introduced in \cite{rebane1987recovery}. 
The key idea is to first recover the skeleton of the polytree by applying the Chow-Liu algorithm \citep{chow1968approximating} to the pairwise sample correlations of the data matrix. After the skeleton is recovered, we propose to recover the set of all v-structures via a simple thresholding approach to pairwise sample correlations. Finally, we recover the CPDAG by applying Rule 1 introduced in \cite{verma1992algorithm} and justified theoretically in \cite{meek1995causal}.

\subsubsection{Chow-Liu Algorithm for Skeleton Recovery}
\label{subsec:skeleton}

The Chow-Liu tree associated with pairwise correlations, which is the estimated skeleton of the underlying polytree, is defined below.

\begin{definition}[Chow-Liu tree associated to pairwise sample correlations]
Consider the linear polytree model \eqref{eq:sem} associated to a polytree $T=(V, E)$, whose skeleton is denoted as $\mathcal{T} = (V, \mathcal{E})$. Let $\mathbb{T}_p$ denote the set of undirected trees over $p$ nodes. Given the data matrix $\mtx{X}=[\vec{x}_1, \ldots, \vec{x}_n]^{\top}\in \mathbb{R}^{n\times p}$, we obtain the sample correlation $\hat{\rho}_{ij}$ between $X_i$ and $X_j$ for all $1\leq i < j \leq p$. The Chow-Liu tree associated with the pairwise sample correlations is defined as the maximum-weight spanning tree over the $p$ nodes where the weights are absolute values of sample correlations:
\begin{equation}
\label{eq:chow-liu}
\widehat{\mathcal{T}} = \argmax_{\mathcal{T}=(V, \mathcal{E}) \in \mathbb{T}_p} \sum_{ i-j \in \mathcal{E}} |\hat{\rho}_{ij}|.
\end{equation}
\end{definition}

For tree-structured undirected graphical models, it has been established in \cite{chow1968approximating} that the maximum likelihood estimation of the underlying tree structure is the Chow-Liu tree associated with the empirical mutual information quantities (which are used to find the maximum-weight spanning tree). The rationale of applying Chow-Liu algorithm to polytree learning has been carefully explained in \cite{rebane1987recovery}, to which interested readers are referred. The step of skeleton recovery can be summarized in Algorithm \ref{alg:chow-liu}.

\begin{algorithm}
\caption{Chow-Liu algorithm}
\hspace*{\algorithmicindent} 
\textbf{Input:} The data matrix $\mtx{X}=[\vec{x}_1, \ldots, \vec{x}_n]^{\top}$. \\
\hspace*{\algorithmicindent} 
\textbf{Output:} Estimated skeleton $\widehat{\mathcal{T}}$.
\begin{algorithmic}[1]
\State Compute the pairwise sample correlations $\hat{\rho}_{ij}$ for all $1 \leq i < j \leq p$;
\State Construct a maximum-weight spanning tree using $|\hat{\rho}_{ij}|$ as the edge weights, i.e., $\widehat{\mathcal{T}}$ defined in \eqref{eq:chow-liu}.
\end{algorithmic}
\label{alg:chow-liu}
\end{algorithm}

It is noteworthy that Algorithm \ref{alg:chow-liu} can be implemented efficiently by applying Kruskal's algorithm \citep{kruskal1956shortest} to pairwise sample correlations $|\hat{\rho}_{ij}|$ for the construction of maximum weight spanning tree. The computational complexity for Kruskal's algorithm is known to be $O(p^2 \log p)$, which is generally no larger than that for computing the sample correlations, which is $O(p^2 n)$.

\subsubsection{CPDAG Recovery}
\label{subsec:equi}

In the second part of the procedure of polytree learning, we aim to estimate the CPDAG of the polytree model. Intuitively speaking, this amounts to figuring out all the edges whose orientations can be determined. The first step of this part is to identify all the v-structures. Under the polytree model, any pair of non-adjacent nodes $i$ and $j$ with common neighbor $k$ form a v-structure $i\rightarrow k \leftarrow j$ if and only if $X_i$ and $X_j$ are mutually independent. We thus determine the existence of a v-structure $i\rightarrow k \leftarrow j$ when the sample correlation $|\hat{\rho}_{ij}|< \rho_{cirt}$, where the choice of threshold is discussed in subsequent sections. After recovering all the v-structures, as aforementioned, it is guaranteed in \cite{meek1995causal} that the CPDAG of the polytree can be recovered by iteratively applying the four rules originally introduced in \cite{verma1992algorithm}. However, given our discussion is restricted to the polytree models, Rules 2, 3, and 4 in \cite{verma1992algorithm} and \cite{meek1995causal} do not apply. We only need to apply Rule 1 repeatedly. This rule can be stated as follows: Orient any undirected edge $j - k$ into $j \rightarrow k$ whenever there is a directed edge $i\rightarrow j$ coming from a third node $i$.

These two steps in the second part of polytree structure learning are summarized as Algorithm \ref{alg:cpdag}.

\begin{algorithm}[htbp]
\caption{Extending the skeleton to a CPDAG}
\hspace*{\algorithmicindent} \textbf{Input:} Estimated skeleton $\widehat{\mathcal{T}}$, sample correlations $\hat{\rho}_{ij}$'s, tuning parameter $\gamma_{crit}$.
\\
\hspace*{\algorithmicindent} \textbf{Output:} Estimated CPDAG $\widehat{C}_T$.
\begin{algorithmic}[1]
\For{Each pair of non-adjacent variables $i$, $j$ with common neighbor $k$ in $\widehat{\mathcal{T}}$}
    \If{$|\hat{\rho}_{ij}|< \gamma_{crit} \sqrt{(\log p)/n}$}
       \State replace $i - k -j$ with the v-structure $i\rightarrow k \leftarrow j$
    \EndIf
\EndFor
\State In the resulting graph, orient as many undirected edges as possible by repeatedly applying the rule: orient an undirected edge $j - k$ into $j \rightarrow k$ whenever there is a directed edge $i\rightarrow j$ for some $i$.
\end{algorithmic}
\label{alg:cpdag}
\end{algorithm}

\section{Main Results for Polytree Learning}
\label{sec:theory_gaussian}
In this section, we discuss sample size conditions for the recovery of skeleton and CPDAG under a Gaussian linear polytree model $T=(V, E)$. We first establish a correlation decay property on the polytree skeleton by applying the famous Wright's formula.

\subsection{Preliminaries}
\label{sec:wright}
First, the polytree learning method introduced in the previous section depends solely on the marginal correlation coefficients, and is thereby invariant to scaling. Therefore, without loss of generality, we can assume that $X_j$'s have a unit variance for all $j \in V$, i.e. $\mtx{\Sigma}$ is the correlation matrix. It is obvious that the standardized version of a linear SEM is still a linear SEM, and they share the same polytree structure. In this case, by denoting the pairwise correlations as $\rho_{ij}\coloneqq \corr(X_i,X_j)$, we have $\sigma_{ij} = \rho_{ij}$ for all $1\leq i, j \leq p$.

Under the linear SEM, we know that $\mtx{B}$ is permutationally similar to an upper triangular matrix, which implies that all eigenvalues of $\mtx{I}- \mtx{B}$ are $1$'s, and further implies that $\mtx{I}- \mtx{B}$ is invertible. Then, $(\mtx{I}- \mtx{B})^\top \vec{x} = \vec{\epsilon}$ implies $\mtx{x} = (\mtx{I} - \mtx{B})^{-\top} \vct{\epsilon}$, and further implies that $\vec{x}$ is mean-zero, and has covariance 
\[
\mtx{\Sigma} = (\mtx{I}- \mtx{B})^{-\top}  \mtx{\Omega} (\mtx{I}- \mtx{B})^{-1}.
\]
This suggests that we can represent the entries of $\mtx{\Sigma}$ by $(\beta_{ij})$ and $(\omega_{ii})$. In fact, this can be conveniently achieved by using Wright's path tracing formula \citep{wright1960path}. We first introduce some necessary definitions in order to obtain such expressions. A \emph{trek} connecting nodes $i$ and $j$ in a directed graph $G = (V,E)$ is a sequence of non-colliding consecutive edges connecting $i$ and $j$ of the form 
\[
i = v_{l}^{L} \leftarrow v_{l-1}^{L} \leftarrow \cdots \leftarrow v_{1}^{L}\leftarrow v_0 \rightarrow v_{1}^{R} \rightarrow \cdots \rightarrow v_{r-1}^{R}\rightarrow v_{r}^{R} = j. 
\]
We define the left-hand side of $\tau$ as $Left(\tau) = v_{l}^{L} \leftarrow \cdots \leftarrow v_0$, the right-hand side of $\tau$ as $Right(\tau) = v_0 \rightarrow \cdots \rightarrow v_{r}^{R}$, and the head of $\tau$ as $H_{\tau} = v_0$. A trek $\tau$ is said to be a \emph{simple trek} if $Left(\tau)$ and $Right(\tau)$ do not have common edges. In the polytree case, any two nodes $(i, j)$ are connected by a unique path. Also the famous Wright's formula has a simple form:

\begin{lemma}
\label{cor:correlation_decay}
Consider the linear polytree model \eqref{eq:sem} with the associated polytree $T = (V, E)$ over $p$ nodes. Also assume that $X_j$ has a unit variance for all $j \in V$. Then, $\rho_{ij} = \beta_{ij}$ for all $i\rightarrow j \in E$. Furthermore, for each pair $(i, j)$, the population correlation coefficient satisfies
\begin{equation}
\label{eq:corr_decay}
\rho_{ij} = 
\left\{\begin{array}{ll}
\prod\limits_{s\rightarrow t \in \tau_{ij}} \rho_{st} & \text{ the path connecting $i$ and $j$ is a simple trek} 
\\
0 & \text{otherwise}.
\end{array}
\right.
\end{equation} 
Also, the noise variances satisfy
\begin{equation}
\label{eq:noise_var}
\omega_{jj} = 1 - \sum\limits_{i\in Pa(j)}\rho_{ij}^{2}, \quad j = 1,\ldots, p.
\end{equation}
\end{lemma}

\begin{remark}
The assumption that the variables $X_1, \ldots, X_p$ have unit variances is unnecessary for \eqref{eq:corr_decay} alone, since correlation coefficients are invariant under standardization.
\end{remark}

\begin{remark}
Here Eqn. \eqref{eq:noise_var} can be derived by the following simple argument: Since $T$ is a polytree, all variables in $Pa(j)$ are independent and are also independent with $\epsilon_j$. Evaluating the variance on both sides of $X_j = \sum\limits_{i\in Pa(j)}\beta_{ij}X_i + \epsilon_j$ leads to \eqref{eq:noise_var}.
\end{remark}

We now introduce the following definitions. 
\begin{definition}
\label{def:rho_min_max}
In a standardized linear polytree model \eqref{eq:sem}, let $\rho_{\min}$ and $\rho_{\max}$ be the minimum and maximum absolute correlation over the tree skeleton, that is 
\[
\rho_{\min} \coloneqq \min_{i \rightarrow j \in E}|\rho_{ij}|, \quad \rho_{\max} \coloneqq \max_{i \rightarrow j \in E}|\rho_{ij}|.
\]
\end{definition}

It is noteworthy that in general we cannot assume that $\rho_{\min}$ is independent of $n$ or $p$. In fact, Eqn. \eqref{eq:noise_var} gives rise to the following relationship between the noise variance and the correlation coefficients with parents for each node: $\sum\limits_{i\in Pa(j)}\rho_{ij}^{2} < 1$, which further implies the following corollary.
\begin{corollary}
\label{thm:rho_min_degree}
Let $d_{*}$ represent the highest in-degree for a polytree. Then $\rho_{\min} < \frac{1}{\sqrt{d_{*}}}$.
\end{corollary}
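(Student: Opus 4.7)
The plan is to leverage the non-negativity of noise variances together with the identification $\rho_{ij} = \beta_{ij}$ for edges of the polytree, both of which come from \cref{lemma:corr_product}. The statement is essentially a pigeonhole observation applied to the inequality $\sum_{i\in Pa(j)}\rho_{ij}^{2} < 1$ that was highlighted just before the corollary. So the proof should be short and there is no substantive obstacle.

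First, I would pick a node $j^{*}$ that realizes the maximum in-degree, i.e.\ $|Pa(j^{*})| = d_{*}$. Since all noise variances are strictly positive in a non-degenerate linear SEM (otherwise $X_{j^{*}}$ would be an exact linear function of its parents and the model would be degenerate, which contradicts $\beta_{ij}\neq 0$ iff $i\to j\in E$ assumed throughout the paper), the formula $\omega_{j^{*}j^{*}} = 1 - \sum_{i\in Pa(j^{*})}\beta_{ij^{*}}^{2}$ from part (2) of \cref{lemma:corr_product} gives $\sum_{i\in Pa(j^{*})}\beta_{ij^{*}}^{2} < 1$. Combined with part (1) of the same lemma, which yields $\rho_{ij^{*}} = \beta_{ij^{*}}$ for each parent $i$ (since the edge $i\to j^{*}$ is itself a simple trek), this rewrites as
\[
\sum_{i\in Pa(j^{*})}\rho_{ij^{*}}^{2} < 1.
\]

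Next, I would apply pigeonhole: the sum of $d_{*}$ non-negative quantities is strictly less than $1$, so at least one of them must be strictly less than $1/d_{*}$. Hence there exists $i^{*}\in Pa(j^{*})$ with $\rho_{i^{*}j^{*}}^{2} < 1/d_{*}$, i.e.\ $|\rho_{i^{*}j^{*}}| < 1/\sqrt{d_{*}}$. Since $i^{*}\to j^{*}$ is an edge of the polytree, by definition of $\rho_{\min}$ we conclude $\rho_{\min} \le |\rho_{i^{*}j^{*}}| < 1/\sqrt{d_{*}}$, which is the desired inequality.

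The only potential subtlety is justifying strict positivity of $\omega_{j^{*}j^{*}}$; if one prefers to avoid invoking non-degeneracy of the noise, one could instead argue directly from $\Var(X_{j^{*}}) = 1$ and independence of the parents, which in the polytree setting forces $\sum_{i\in Pa(j^{*})}\beta_{ij^{*}}^{2} = 1 - \omega_{j^{*}j^{*}} \le 1$ with equality only in the degenerate case $\omega_{j^{*}j^{*}} = 0$. Either way the strict inequality is immediate under the standing modeling assumptions, so this is not really an obstacle.
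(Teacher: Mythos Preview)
Your proof is correct and follows exactly the route the paper indicates: it derives $\sum_{i\in Pa(j^{*})}\rho_{ij^{*}}^{2} < 1$ from \cref{lemma:corr_product} and extracts the bound on $\rho_{\min}$ from it. The only cosmetic difference is that the paper would presumably use the lower bound $\sum_{i\in Pa(j^{*})}\rho_{ij^{*}}^{2}\ge d_{*}\rho_{\min}^{2}$ directly rather than pigeonhole, but the two arguments are equivalent one-liners.
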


\begin{remark}
In contrast, it is reasonable to assume $\rho_{\min}$ to be a positive constant independent of $p$ under the undirected tree-structured Gaussian graphical model, since after transforming it to a rooted tree as in Section \ref{subsec:polytree}, the highest in-degree satisfies $d_{*}=1$.
\end{remark}

A key lemma is the following well-known convergence rate for estimating the population correlation matrix:

\begin{lemma}
\label{lem:corr_concent}
Consider a Gaussian linear SEM \eqref{eq:sem} with $n \geq C_0 \log p$ for some numerical constant $C_0$. Then, on an event $\mathcal{E}$ with probability at least $1 - 1/p^3$, the following inequality holds for some absolute constant $C$:
\[
\| \widehat{\mtx{\rho}} - \mtx{\rho} \|_{\max} < C \sqrt{\frac{\log p}{n}},
\]
where $\mtx{\rho}$ and $\widehat{\mtx{\rho}}$ denote the population and sample correlation matrices, respectively, and $\|\cdot\|_{\max}$ represents the entrywise supremum norm. 
\end{lemma}

\begin{proof}
This is a well-known result, which can be obtained by combining Remark 5.40 of \cite{vershynin2012introduction}
and Lemma 1 in \cite{kalisch2007estimating}.
\end{proof}

\subsection{Skeleton Recovery}
\label{sec:gaussian_skeleton}
First, we introduce an important result in analyzing the Chow-Liu algorithm:
\begin{lemma}[e.g. \cite{bresler2020learning}, Lemma 6.1 and Lemma 8.8]
\label{lemma:two_tree} 
Let $\mathcal{T}$ be the skeleton of true polytree $T=(V, E)$ and $\widehat{\mathcal{T}}$ be the estimated tree through Chow-Liu algorithm \eqref{eq:chow-liu}. If an edge $(w,\tilde{w}) \in \mathcal{T}$ and $(w,\tilde{w}) \notin \widehat{\mathcal{T}}$, i.e. this edge is incorrectly missed, then there exists an edge $(v,\tilde{v}) \in \widehat{\mathcal{T}}$ and $(v,\tilde{v}) \notin \mathcal{T}$ such that $(w,\tilde{w}) \in path_{\mathcal{T}}(v,\tilde{v})$ and $(v,\tilde{v}) \in path_{\widehat{\mathcal{T}}}(w,\tilde{w})$. On such an error event, we have $|\hat{\rho}_{v\tilde{v}}|\geq |\hat{\rho}_{w\tilde{w}}|$.
\end{lemma}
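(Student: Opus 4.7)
The plan is to use a standard matroid-style exchange argument based on the cut defined by the missing edge $(w,\tilde{w})$. First I would delete $(w,\tilde{w})$ from the true skeleton $\mathcal{T}$; since $\mathcal{T}$ is a tree, this produces exactly two connected components, which I will call $A$ (containing $w$) and $B$ (containing $\tilde{w}$). Because $\mathcal{T}$ is a tree, $(w,\tilde{w})$ is the unique edge of $\mathcal{T}$ crossing the cut $(A,B)$. Next I would locate a matching edge in the Chow-Liu tree. Since $\widehat{\mathcal{T}}$ is a spanning tree on $V$ and does not contain $(w,\tilde{w})$, there is a unique path $path_{\widehat{\mathcal{T}}}(w,\tilde{w})$ from $w\in A$ to $\tilde{w}\in B$ in $\widehat{\mathcal{T}}$; this path must contain at least one edge crossing $(A,B)$. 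I would pick any such crossing edge and call it $(v,\tilde{v})$.

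With this choice, the four structural properties fall out immediately. By construction $(v,\tilde{v})\in\widehat{\mathcal{T}}$ and $(v,\tilde{v})\in path_{\widehat{\mathcal{T}}}(w,\tilde{w})$; uniqueness of the cross-edge in $\mathcal{T}$ forces $(v,\tilde{v})\notin\mathcal{T}$; and since $v$ and $\tilde{v}$ lie on opposite sides of the cut $(A,B)$, the unique $\mathcal{T}$-path between them must use the only cross-edge available, namely $(w,\tilde{w})$, giving $(w,\tilde{w})\in path_{\mathcal{T}}(v,\tilde{v})$. For the correlation inequality, I would perform the swap: define $\mathcal{T}'=(\widehat{\mathcal{T}}\setminus\{(v,\tilde{v})\})\cup\{(w,\tilde{w})\}$. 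Removing $(v,\tilde{v})$ from $\widehat{\mathcal{T}}$ splits it into two components that separate $w$ from $\tilde{w}$ (because $(v,\tilde{v})$ lies on $path_{\widehat{\mathcal{T}}}(w,\tilde{w})$), so adding $(w,\tilde{w})$ reconnects them and $\mathcal{T}'$ is again a spanning tree over $V$. Because the Chow-Liu tree $\widehat{\mathcal{T}}$ maximises $\sum_{e}|\hat\rho_e|$ over all spanning trees, we have $\sum_{e\in\widehat{\mathcal{T}}}|\hat\rho_e|\ge\sum_{e\in\mathcal{T}'}|\hat\rho_e|$, and cancelling the common edges leaves $|\hat\rho_{v\tilde{v}}|\ge|\hat\rho_{w\tilde{w}}|$, as required.

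There is no real obstacle here beyond bookkeeping, since the argument is essentially the cycle/cut exchange that underlies greedy spanning-tree optimality. The only subtle point is to ensure that the \emph{same} edge $(v,\tilde{v})$ simultaneously lies on $path_{\widehat{\mathcal{T}}}(w,\tilde{w})$ and crosses the $\mathcal{T}$-cut $(A,B)$, so that both path-containment claims in the statement hold at once; this is handled cleanly by picking $(v,\tilde{v})$ as a cross-edge \emph{along} the $w$-to-$\tilde{w}$ path in $\widehat{\mathcal{T}}$, rather than an arbitrary cross-edge of $\widehat{\mathcal{T}}$.
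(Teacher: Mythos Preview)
Your argument is correct and is precisely the standard cut/exchange proof that underlies the cited results in \cite{bresler2020learning}; the paper itself does not give an independent proof of this lemma but simply invokes that reference, so there is nothing further to compare.
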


We now introduce a sufficient condition on the sample size for skeleton recovery under the Gaussian linear polytree model, in which the independent noise variables satisfy $\epsilon_j \sim \mathcal{N}(0, \omega_{jj})$ for $j =1, \ldots, p$. Then by $\vct{x} = (\mtx{I} - \mtx{B})^{-\top} \vct{\epsilon}$, we know that $\vct{x}$ is also multivariate Gaussian. This fact will help quantify the discrepancy between population and sample pairwise correlations as characterized in Lemma \ref{lem:corr_concent}.

\begin{theorem}
\label{thm:skeleton}
Consider a Gaussian linear SEM \eqref{eq:sem} associated to a polytree $T=(V, E)$ with $\rho_{\max}<1 - \delta$. Denote by $\widehat{\mathcal{T}}$ the estimated skeleton by the Chow-Liu algorithm (Algorithm \ref{alg:chow-liu}), and by $\mathcal{T}$ the true polytree skeleton. Then, on the event $\mathcal{E}$ with probability at least $1 - 1/p^3$ defined in Lemma \ref{lem:corr_concent}, we have exact polytree skeleton recovery $\widehat{\mathcal{T}} = \mathcal{T}$ as long as 
\begin{equation}
\label{eq:rho_min1}
n > \left(\frac{4C^2}{\delta^2 }\right)\frac{\log p}{\rho_{\min}^2}
\end{equation}
where $C$ is as defined in Lemma \ref{lem:part_corr_concent}.
\end{theorem}

\begin{proof}

Consider any undirected edge $(w, \tilde{w}) \in \mathcal{T}$ and any
non-adjacent pair $(v,\tilde{v})$ such that $(w,\tilde{w}) \in path_T(v,\tilde{v})$, where $path_T(v,\tilde{v})$ is the path connecting $v$ and $\tilde{v}$ in the polytree $T$. If $path_T(v,\tilde{v})$ is a simple trek, then Corollary \ref{cor:correlation_decay} implies that $\rho_{v\tilde{v}}$ consists of the product among several correlation coefficients containing $\rho_{w\tilde{w}}$. Hence $|\rho_{v\tilde{v}}|\leq |\rho_{w\tilde{w}}|\rho_{\max}$. On the other hand, if $path_T(v,\tilde{v})$ is not a simple trek, then we have $\rho_{v\tilde{v}}=0$. Overall, we can obtain an upper bound for $|\rho_{v\tilde{v}}|-|\rho_{w\tilde{w}}|$.
\begin{equation*}
\label{ineq:population}
|\rho_{v\tilde{v}}|-|\rho_{w\tilde{w}}| \leq  |\rho_{w\tilde{w}}|(\rho_{\max}-1) \leq -\delta \rho_{\min}.
\end{equation*}
Then, on the event $\mathcal{E}$, uniformly for any undirected edge $(w, \tilde{w}) \in \mathcal{T}$ and any
non-adjacent pair $(v,\tilde{v})$ such that $(w,\tilde{w}) \in path_{\mathcal{T}}(v,\tilde{v})$, there holds
\begin{align*}
\label{ineq:decomp}
|\hat{\rho}_{v\tilde{v}}| - |\hat{\rho}_{w\tilde{w}}| 
&\leq |\hat{\rho}_{v\tilde{v}} - \rho_{v\tilde{v}}| 
+ |\hat{\rho}_{w\tilde{w}} - \rho_{w\tilde{w}}| 
+ |\rho_{v\tilde{v}}|-|\rho_{w\tilde{w}}| \nonumber
\\
&< 2C \sqrt{(\log p)/n} - \delta \rho_{\min} < 0,
\end{align*}
where the last inequality is due to the condition \eqref{eq:rho_min1}. Then, Lemma \ref{lemma:two_tree} implies that $\widehat{\mathcal{T}} = \mathcal{T}$ on the event $\mathcal{E}$.
\end{proof}

\begin{remark}
Our argument on skeleton recovery basically follows the standard arguments based on correlation decay over skeleton in the literature of undirected tree learning, e.g., \cite{nikolakakis2019learning, tavassolipour2018learning, bresler2020learning}.
On the other hand, treating undirected tree models as rooted polytree models, our sample size condition for skeleton recovery in Theorem \ref{thm:skeleton} is $n = O(\log p)$, which is the optimal sample size condition for undirected tree recovery.
\end{remark}

\begin{remark}
The above condition implies some dependence of the sample size on the maximum in-degree $d_*$. In fact, together with Corollary \ref{thm:rho_min_degree}, the sample size condition is essentially $n\geq O(d_{*}\log p)$ if $\rho_{\min} \asymp {1/\sqrt{d_{*}}}$.
\end{remark}

\subsection{CPDAG Recovery}
As described in \cref{subsec:equi}, after obtaining the estimated skeleton, the next step is to identify all v-structures by comparing $\rho_{ij}$ for all node triplets $i-k-j$ in the skeleton with a threshold $\rho_{crit}$. Then the orientation propagation rule described in Algorithm \ref{alg:cpdag} can be applied iteratively to orient as many undirected edges as possible. If both the skeleton and v-structures are correctly identified, the orientation rule will be able to recover the true CPDAG, i.e. the equivalence class \citep{meek1995causal}.

\begin{theorem}
\label{thm:cpdag}
Consider a Gaussian linear SEM \eqref{eq:sem} associated to a polytree $T=(V, E)$ with $\rho_{\max}<1 - \delta$. Denote by $\widehat{\mathcal{T}}$ and $\widehat{C}_T$ the estimated polytree skeleton from Algorithm \ref{alg:chow-liu} and CPDAG from Algorithm \ref{alg:cpdag} with threshold $\gamma_{crit} \sqrt{(\log p)/n}$. Also, denote by $\mathcal{T}$ and $C_T$ the true polytree skeleton and the true polytree CPDAG, respectively. Then, on the event $\mathcal{E}$ with probability at least $1 - 1/p^3$ defined in Lemma \ref{lem:corr_concent}, we have exact polytree skeleton recovery $\widehat{\mathcal{T}} = \mathcal{T}$ as well as exact polytree CPDAG recovery $\widehat{C}_T = C_T$, as long as 
\begin{equation}
\label{eq:rho_min2}
\gamma_{crit} > C
\quad \text{and} \quad
n>C_0(\delta) \gamma_{crit}^2 \left(\frac{\log p}{\rho_{\min}^4}\right),
\end{equation}
where $C$ is as defined in Lemma \ref{lem:corr_concent}, and $C_0(\delta)$ is a constant only depending on $\delta$. 
\end{theorem}

\begin{proof} 
Since the skeleton recovery is guaranteed by Theorem \ref{thm:skeleton}, it suffices to show that under the condition \eqref{eq:rho_min2}, all v-structures are correctly identified on the event $\mathcal{E}$. Let's consider all node triplets $i-k-j$ in $\mathcal{T}$. If the ground truth is $i\rightarrow k\leftarrow j$, we know that $\rho_{ij} = 0$. Then, by \eqref{eq:rho_min2}, on $\mathcal{E}$ we have 
\[
|\hat{\rho}_{ij}| \leq C\sqrt{(\log p)/n} < \gamma_{crit} \sqrt{(\log p)/n}. 
\]
This means the v-structure is identified by Algorithm \ref{alg:cpdag}.

In contrast, if the ground truth is  $i\leftarrow k \leftarrow j$ or $i\leftarrow k \rightarrow j$ or  $i\rightarrow k \rightarrow j$, Lemma \ref{cor:correlation_decay} implies that $|\rho_{ij}| = |\rho_{ik}||\rho_{kj}| \geq \rho_{\min}^2$. Then, on $\mathcal{E}$, there holds 
\[
|\hat{\rho}_{ij}| \geq |\rho_{ij}| - C\sqrt{(\log p)/n} \geq \rho_{\min}^2 - C\sqrt{(\log p)/n} > \gamma_{crit} \sqrt{(\log p)/n},
\]
where the last inequality is also due to by \eqref{eq:rho_min2}. This means this triplet is correctly identified as a non-v-structure. 

In sum, we identify all the v-structures exactly. Then the CPDAG of $T$ can be exactly recovered by Algorithm \ref{alg:cpdag} as guaranteed in \cite{meek1995causal}. 
\end{proof}

\begin{remark}
It is noteworthy to observe the difference between the sample size conditions in Theorems \ref{thm:skeleton} and \ref{thm:cpdag}. In particular, if $\rho_{\min} \asymp {1/\sqrt{d_{*}}}$, the above sufficient condition on sample size for CPDAG recovery is essentially $n\geq O(d_{*}^2 \log p)$, while recall that the sample size condition for skeleton recovery is $n\geq O(d_{*} \log p)$. This dependence on maximum in-degree is probably a particular property for polytree learning, given that most existing theory on general sparse DAG recovery usually requires the sample size to be greater than the maximum neighborhood size, e.g., Theorem 2 of \cite{kalisch2007estimating}.
\end{remark}

\subsection{Information-theoretic Lower Bounds on the Sample Size}
In this subsection, we will establish the necessary conditions on the sample size for both skeleton and CPDAG recovery under Gaussian linear polytree models. In particular, we will use Fano's method to derive information-theoretic bounds. 
\begin{theorem}
\label{thm:lower_bound}
Let $\mathcal{C}(\rho_{\min})$ be a collection of Gaussian linear polytree models, such that $\rho_{\min} \coloneqq \min_{i \rightarrow j \in E}|\rho_{ij}|$ is fixed and satisfies $0<\rho_{\min}<1/\sqrt{p}$. In each model out of this class, assume that $\rho_{\max} \coloneqq \max_{i \rightarrow j \in E}|\rho_{ij}|<1/2$. Assume $p\geq 10$. Suppose that $\mathcal{C}(\rho_{\min})$ is indexed by $\theta$, with corresponding polytree $T_{\theta}$, covariance matrix $\mtx{\Sigma}_{\theta}$, tree skeleton $\mathcal{T}_{\theta}$, and CPDAG $C_{T_{\theta}}$. Then for any skeleton estimator $\widehat{\mathcal{T}}$, there holds
\[
\sup_{\theta \in \mathcal{C}(\rho_{\min})} \P_{\mtx{\Sigma}_{\theta}}(\widehat{\mathcal{T}}(\mtx{X}) \neq \mathcal{T}_{\theta}) \geq 1/2
\]
provided 
\[
n< \frac{1}{\rho^2_{\min}}(\log(p-2) - 2). 
\]
Moreover, for any CPDAG estimator $\widehat{C}$, there holds
\[
\sup_{\theta \in \mathcal{C}(\rho_{\min})} \P_{\mtx{\Sigma}_{\theta}}(\widehat{C}(\mtx{X}) \neq C_{T_{\theta}}) \geq 1/2
\]
provided 
\[
n < \frac{1}{5\rho^4_{\min}}\left(\log \frac{(p-1)(p-2)}{2} - 2\right).
\]
\end{theorem}

\begin{proof}
The key idea is to apply Fano's method to appropriate sub-classes of $\mathcal{C}(\rho_{\min})$ to establish the intended information-theoretic lower bounds for both skeleton and CPDAG recovery. Generally speaking, let $\mathcal{C}_M=\{T_1, \ldots, T_M\}$ be a sub-class of polytree models $\mathcal{C}(\rho_{\min})$ whose respective covariance matrices are denoted as $\mtx{\Sigma}(T_1), \ldots, \mtx{\Sigma}(T_M)$. Let model index $\theta$ be chosen uniformly at random from $\{1, \ldots, M\}$. Given the observations $\mtx{X} \in \mathbb{R}^{n \times p}$, the decoder $\psi$ estimates the underlying polytree structure with maximal probability of decoding error defined as
\[
p_{\text{err}}(\psi) = \max_{1\leq j \leq M} \P_{\mtx{\Sigma}(T_j)}\left(\psi(\mtx{X}) \neq T_j\right).
\]
By Fano's inequality \citep{thomas2006elements}, the maximal probability of error over $\mathcal{C}_M$ can be lower bounded as
\[
\inf_{\psi}p_{\text{err}}(\psi) \geq 1- \frac{I(\theta; \mtx{X})+ 1}{\log M}.
\]
Given all involved distributions are multivariate Gaussian, we will apply the following entropy-based bound of the mutual information that can be found in \cite{wang2010information}:
\[
I(\theta; \mtx{X}) \leq \frac{n}{2}F(\mathcal{C}), \quad \text{where}
\]
\begin{equation}
\label{eq:entropy_bound}
F(\mathcal{C}) \coloneqq \logdet (\overline{\mtx{\Sigma}}) - \frac{1}{M}\sum_{j=1}^M \logdet (\mtx{\Sigma}(T_j))
\end{equation}
and the averaged covariance matrix $\overline{\mtx{\Sigma}} \coloneqq \frac{1}{M}\sum_{j=1}^M \mtx{\Sigma}(T_j)$.

\subsubsection*{Lower Bound for Skeleton Recovery}
In the following we consider a class of polytree models $\mathcal{C}_M = \{T_1, \ldots, T_M\}$ where $M=p-2$. These polytrees share $p-2$ common directed edges $1 \rightarrow (p-1)$, $2 \rightarrow (p-1)$, ..., $(p-2) \rightarrow (p-1)$. For the $(p-1)$-th directed edge, we let $p \rightarrow 1$ in $T_1$, $p \rightarrow 2$ in $T_2$, ..., $p \rightarrow (p-2)$ in $T_{p-2}$. Also, we assume that all variables have variance one, and the correlation coefficients on the skeleton are all $\rho$ that satisfies $0 < \rho < \frac{1}{\sqrt{p}}$. Here we write $\rho = \rho_{\min}$ for simplicity. Note that the polytrees in this sub-class of $\mathcal{C}(\rho)$ (defined in the statement of Theorem \ref{thm:lower_bound}) have distinct skeletons, so 
\[
\inf_{\widehat{\mathcal{T}}}\sup_{\theta \in \mathcal{C}(\rho_{\min})} \P_{\mtx{\Sigma}_{\theta}}(\widehat{\mathcal{T}}(\mtx{X}) \neq \mathcal{T}_{\theta}) \geq \inf_{\psi} \max_{1\leq j \leq M} \P_{\mtx{\Sigma}(T_j)}\left(\psi(\mtx{X}) \neq T_j\right).
\]

We can easily obtain the formula for each covariance $\mtx{\Sigma}(T_j)$ for $j=1, \ldots, M$ by using Corollary \ref{cor:correlation_decay}. For example, for $T_1$, we have
\[
\mtx{\Sigma}(T_1) = \left[
    \begin{array}{cccc;{2pt/2pt}cc}
        1 & 0 & \ldots & 0 & \rho & \rho
        \\
        0 & 1 & \ldots & 0 & \rho & 0
        \\
        \vdots & \vdots & \ddots & \vdots & \vdots & \vdots
        \\
        0 & 0 & \ldots & 1 & \rho & 0       
        \\ \hdashline[2pt/2pt]
        \rho & \rho & \ldots & \rho & 1 & \rho^2   
        \\
        \rho & 0 & \ldots & 0 & \rho^2 & 1
    \end{array}
\right] 
\coloneqq 
\begin{bmatrix} \mtx{A} & \mtx{B} \\ \mtx{B}^\top & \mtx{D} \end{bmatrix}
\]
The Schur complement of $\mtx{A} = \mtx{I}$ is thereby
\[
\mtx{D} - \mtx{B}^\top \mtx{A}^{-1} \mtx{B} = \begin{bmatrix} 1-(p-2)\rho^2 & 0 \\ 0 & 1-\rho^2 \end{bmatrix}.
\]
Then 
\[
\det (\mtx{\Sigma}(T_1)) = \det (\mtx{A}) \det (\mtx{D} - \mtx{B}^\top \mtx{A}^{-1} \mtx{B}) = (1-\rho^2)(1-(p-2)\rho^2). 
\]
Similarly, for all $j=1, \ldots, p-2$, there holds $\det (\mtx{\Sigma}(T_j)) = (1-\rho^2)(1-(p-2)\rho^2)$. 

On the other hand, the average covariance is
\[
\overline{\mtx{\Sigma}} = \frac{1}{p-2}\sum_{j=1}^{p-2} \mtx{\Sigma}(T_j)
= \left[
    \begin{array}{cccc;{2pt/2pt}cc}
        1 & 0 & \ldots & 0 & \rho & \rho/(p-2)
        \\
        0 & 1 & \ldots & 0 & \rho & \rho/(p-2)
        \\
        \vdots & \vdots & \ddots & \vdots & \vdots & \vdots
        \\
        0 & 0 & \ldots & 1 & \rho & \rho/(p-2)       
        \\ \hdashline[2pt/2pt]
        \rho & \rho & \ldots & \rho & 1 & \rho^2   
        \\
        \rho/(p-2) & \rho/(p-2) & \ldots & \rho/(p-2) & \rho^2 & 1
    \end{array}
    \right]. 
\]
As with above, we can use Schur complement to obtain 
\[
\det(\overline{\mtx{\Sigma}}) = (1-\rho^2/(p-2))(1-(p-2)\rho^2). 
\]
Plug these results into \eqref{eq:entropy_bound}, we have
\[
F(\mathcal{C})=\log\left(1+\frac{(p-3)\rho^2}{(p-2)(1-\rho^2)}\right) \leq \frac{(p-3)\rho^2}{(p-2)(1-\rho^2)} \leq \frac{(p-3)\rho^2}{(p-2)(1-1/p)} \leq \rho^2.
\]
Then $p_{\text{err}} \geq 1 - (\frac{n}{2}\rho^2 + 1)/\log (p-2)$. To ensure $p_{\text{err}} > 1/2$, we only need to require $1 - (\frac{n}{2}\rho^2 + 1)/\log (p-2) > 1/2$, which is equivalent to $n< (\log(p-2) - 2)/\rho^2$.

\subsubsection*{Lower Bound for CPDAG Recovery}
Let's now consider another class of polytree models $\mathcal{C}_M = \{T_1, \ldots, T_M\}$ where $M=\binom{p-1}{2}$. All polytrees in this class are stars with hub node $p$, and $p$ is directed to all but two nodes in $\{1, \ldots, p-1\}$. In $T_1$, the directed edges are $1 \rightarrow p$, $2 \rightarrow p$, $p \rightarrow 3$, $p \rightarrow 4$, ..., $p \rightarrow (p-1)$. In $T_2$, the directed edges are $1 \rightarrow p$, $p \rightarrow 2$, $3 \rightarrow p$, $p \rightarrow 4$, ..., $p \rightarrow (p-1)$. And so on until in $T_M$, the directed edges are $p \rightarrow 1$, $p \rightarrow 2$, ..., $p \rightarrow (p-3)$, $(p-2) \rightarrow p$, $(p-1) \rightarrow p$. Also, assume that all variables have variance one, and the correlation coefficients on the skeleton are all $\rho$ that satisfies $0 < \rho < \frac{1}{2}$. Again, we write $\rho = \rho_{\min}$ for simplicity. Although the polytrees in this sub-class of $\mathcal{C}(\rho)$ have the same skeletons, but they have distinct CPDAGs since they have distinct sets of v-structures. Therefore,
\[
\inf_{\widehat{C}}\sup_{\theta \in \mathcal{C}(\rho_{\min})} \P_{\mtx{\Sigma}_{\theta}}(\widehat{C}(\mtx{X}) \neq C_{T_{\theta}}) \geq \inf_{\psi} \max_{1\leq j \leq M} \P_{\mtx{\Sigma}(T_j)}\left(\psi(\mtx{X}) \neq T_j\right).
\]

Again, we have the formula for each covariance $\mtx{\Sigma}(T_j)$ for $j=1, \ldots, M$ by using Corollary \ref{cor:correlation_decay}. For example, for $T_1$, we have
\[
\mtx{\Sigma}(T_1) = \left[
    \begin{array}{cc;{2pt/2pt}ccc;{2pt/2pt}c}
        1 & 0 & \rho^2 & \ldots & \rho^2 & \rho
        \\
        0 & 1 & \rho^2 & \ldots & \rho^2 & \rho
        \\ \hdashline[2pt/2pt]
        \rho^2 & \rho^2 & 1 & \ldots & \rho^2 & \rho
        \\
        \vdots & \vdots & \vdots & \ddots & \vdots & \vdots 
        \\
        \rho^2 & \rho^2 & \rho^2 & \ldots & 1 & \rho      
        \\ \hdashline[2pt/2pt]
        \rho & \rho & \rho & \ldots & \rho & 1
    \end{array}
\right] 
\]
Recall that in a linear polytree model there holds $\mtx{\Sigma}= (\mtx{I} - \mtx{B})^{-\top}\mtx{\Omega}(\mtx{I} - \mtx{B})$. Since $\mtx{B}$ can be transformed to a strict upper triangular matrix by permuting the $p$ nodes, we know that $\det(\mtx{I} - \mtx{B}) = 1$. Then 
\[
\det(\mtx{\Sigma}) = \det(\mtx{\Omega}) = \prod_{j=1}^p \omega_{jj} = \prod_{j=1}^p \left(1 - \sum_{i \in Pa(j)}\rho_{ij}^2\right).
\]
Then for $j=1, \ldots, M$, there holds $\det(\mtx{\Sigma}(T_j)) = (1-\rho^2)^{p-3}(1-2\rho^2)$, which implies that 
\[
\logdet(\mtx{\Sigma}(T_j)) = (p-3)\log(1 - \rho^2) + \log(1-2\rho^2).
\]
On the other hand, we have
\[
\overline{\mtx{\Sigma}} = \frac{1}{M}\sum_{j=1}^M \mtx{\Sigma}(T_j)
= \left[
    \begin{array}{cccc;{2pt/2pt}c}
        1 & \frac{M-1}{M}\rho^2 & \ldots & \frac{M-1}{M}\rho^2 & \rho 
        \\
        \frac{M-1}{M}\rho^2 & 1 & \ldots & \frac{M-1}{M}\rho^2 & \rho 
        \\
        \vdots & \vdots & \ddots & \vdots & \vdots 
        \\
        \frac{M-1}{M}\rho^2 & \frac{M-1}{M}\rho^2 & \ldots & 1 & \rho        
        \\ \hdashline[2pt/2pt]
        \rho & \rho & \ldots & \rho & 1 
    \end{array}
\right] 
\coloneqq 
\begin{bmatrix} \mtx{A} & \mtx{B} \\ \mtx{B}^\top & \mtx{D} \end{bmatrix}.
\]
The Schur complement of $\mtx{D}=1$ is
\[
\overline{\mtx{\Sigma}}/\mtx{D} = \mtx{A} - \mtx{B}\mtx{D}^{-1}\mtx{B}^\top =
= \left[
    \begin{array}{cccc}
        1-\rho^2 & -\rho^2/M & \ldots & -\rho^2/M  
        \\
        -\rho^2/M & 1-\rho^2 & \ldots & -\rho^2/M 
        \\
        \vdots & \vdots & \ddots & \vdots  
        \\
        -\rho^2/M & -\rho^2/M & \ldots & 1-\rho^2         
    \end{array}
\right]. 
\]
It's easy to obtain all the eigenvalues of $\overline{\mtx{\Sigma}}/\mtx{D}$: $1- \frac{p+1}{p-1}\rho^2$ with multiplicity $1$ and $1- \frac{p(p-3)}{(p-1)(p-2)}\rho^2$ with multiplicity $p-2$. Plug these results into \eqref{eq:entropy_bound}, we have
\begin{align*}
F(\mathcal{C})
&= \log\left(1 - \frac{p+1}{p-1}\rho^2\right) + (p-2)\log\left(1 - \frac{p(p-3)}{(p-1)(p-2)}\rho^2\right) 
\\
&~~~~~~~~- \log(1 - 2\rho^2) - (p-3)\log(1-\rho^2)
\\
&= \log\left(1 - \frac{p+1}{p-1}\rho^2\right) + (p-2)\log\left(1+\frac{2}{(p-1)(p-2)}\frac{\rho^2}{1-\rho^2}\right)
+ \log\left(1+\frac{\rho^2}{1-2\rho^2}\right)
\\
&\leq -\frac{p+1}{p-1}\rho^2 + \frac{2}{p-1}\frac{\rho^2}{1-\rho^2} + \frac{\rho^2}{1-2\rho^2}
\\
&= -\frac{p+1}{p-1}\rho^2 + \frac{2}{p-1}\left(\rho^2 + \frac{\rho^4}{1-\rho^2}\right) + \rho^2 + \frac{2\rho^4}{1-2\rho^2}
\\
& = \frac{2}{p-1} \frac{\rho^4}{1-\rho^2} + \frac{2\rho^4}{1-2\rho^2} < 5\rho^4,
\end{align*}
where the first inequality is due to $\log(1+x) \leq x$, and the second inequality is due to the assumption that $\rho^2<1/4$ and $p\geq 10$. As with the case of skeleton recovery, we know that $p_{\text{err}}>1/2$ as long as we require that
\[
n < \frac{1}{5\rho^4}\left(\log \frac{(p-1)(p-2)}{2} - 2\right).
\]
\end{proof}

Compare Theorem \ref{thm:lower_bound} with Theorems \ref{thm:skeleton} and \ref{thm:cpdag}, we can conclude that our derived sufficient conditions on the sample sizes for the recovery of both skeleton and CPDAG are sharp.

\section{PC Algorithm Adapted to Polytree Models}
\label{sec:PC_adapted_polytree}
In this section, we introduce another algorithm to recover the skeleton and CPDAG of the polytree, which is adapted from the PC algorithm but amenable to linear polytree structure. Throughout our discussion, we assume $\rho_{max} \leq 1 - \delta$ for some constant $\delta>0$.

To implement a PC algorithm that is adapted to polytree structures, we consider an early stopping version of the algorithm explained in \cite{kalisch2007estimating}. The following lemma demonstrates important properties of marginal and conditional probabilities on a polytree.

\begin{lemma}
\label{lem:part_corr_pop}
Consider a Gaussian linear SEM \eqref{eq:sem} associated to a polytree $T=(V, E)$ with $\rho_{\max} < 1 - \delta$. Then we have the following:
\begin{enumerate}
\item For any $(i, j) \in \mathcal{T}$, $|\rho_{ij}| \geq \rho_{\min}$ and 
$|\rho_{ij \vert k}| \geq \delta \rho_{\min}$ 
for any $k \notin \{i, j\}$.
\item For any $(i, j) \notin \mathcal{T}$, if the path connecting $i$ and $j$ is not a trek, then $\rho_{ij} = 0$.
\item For any $(i, j) \notin \mathcal{T}$,  if the path connecting $i$ and $j$ is a trek, then there exists some $k \in \adj(i, \mathcal{T}) \cup \adj(j, \mathcal{T}) \backslash \{i, j\}$ such that $\rho_{ij \vert k} = 0$.
\end{enumerate}
\end{lemma}

\begin{proof}
We discuss these three cases separately:
\begin{enumerate}
\item In this case, we have $|\rho_{ij}| \geq \rho_{\min}$ by definition. For any $k \notin \{i, j\}$, since $(i, j) \in \mathcal{T}$, by the tree structure, we know either $i$ lies in the path connecting $j$ and $k$, or $j$ lies in the path connecting $i$ and $k$. WLOG, we can assume the former is true. Then by Corollary \ref{cor:correlation_decay}, there holds  $|\rho_{jk}| \leq \rho_{\max} |\rho_{ij}|$, which implies $\vert \rho_{ij} - \rho_{ik} \rho_{jk} \vert \geq |\rho_{ij}|(1 - \rho_{\max}^2)$. Then 
\[
|\rho_{ij \vert k}|= \left\vert \frac{\rho_{ij} - \rho_{ik} \rho_{jk}}{\sqrt{(1 - \rho_{ik}^2)(1 - \rho_{jk}^2)}} \right\vert \geq \rho_{\min}(1 - \rho_{\max}^2) \geq \delta \rho_{\min}.
\]
\item This is directly implied by Corollary \ref{cor:correlation_decay}.
\item When the path connecting $i$ and $j$ is a trek, there exists some $k \in \adj(i, \mathcal{T}) \cup \adj(j, \mathcal{T}) \backslash \{i, j\}$ on this path. Corollary \ref{cor:correlation_decay} implies $\rho_{ij} = \rho_{ik} \rho_{jk}$, which implies $\rho_{ij \vert k} = 0$.
\end{enumerate}
\end{proof}

Besides the population correlations $\rho_{ij}$, for any distinct $i$, $j$, and $k$, the population and sample partial correlations can be represented by marginal correlations through the following equations:
\begin{equation}
\label{eq:partial_marginal_corr}
\rho_{ij \vert k} = \frac{\rho_{ij} - \rho_{ik} \rho_{jk}}{\sqrt{(1 - \rho_{ik}^2)(1 - \rho_{jk}^2)}} \quad \text{and} \quad
\hat{\rho}_{ij \vert k} = \frac{\hat{\rho}_{ij} - \hat{\rho}_{ik}\hat{\rho}_{jk}}{\sqrt{(1 - \hat{\rho}_{ik}^2)(1 - \hat{\rho}_{jk}^2)}}.
\end{equation}
Note that although the concept of partial correlations will not be used in this section, we need it in later sections. The relationship between population and sample marginal and conditional correlations can be characterized by the following lemma, which is a simplified version of Lemma 1 in \cite{kalisch2007estimating}.

\begin{lemma}
\label{lem:part_corr_concent}
Consider a Gaussian linear SEM \eqref{eq:sem} with $n \geq C_0 \log p$ for some numerical constant $C_0$. Then, on an event $\mathcal{E}$ with probability at least $1 - 1/p^3$, the following inequality holds for some absolute constant $C$:
\[
\| \widehat{\mtx{\rho}} - \mtx{\rho} \|_{\max} < C \sqrt{\frac{\log p}{n}},
\]
and
\[
\left\vert \hat{\rho}_{ij \vert k} - \rho_{ij \vert k} \right\vert < C \sqrt{\frac{\log p}{n}} \quad \forall i<j,~ k \notin \{i, j\}.
\]
\end{lemma}

\begin{proof}
This is an easy generalization of Lemma \ref{lem:corr_concent} by the relationship between partial correlation and correlation under multivariate normal distribution established in \cite{fisher1924distribution}.
\end{proof}

From the above lemma, it is natural to consider the early-stopping PC algorithm with a tuning parameter $\gamma_{crit}$ that is described in Algorithm \ref{alg:early_stopping_PC}. The estimated skeleton is denoted as $\widehat{\mathcal{T}}$.

\begin{algorithm}[htbp]
\caption{Estimating the polytree skeleton by the simplified PC algorithm}
\hspace*{\algorithmicindent} 
\textbf{Input:} The $n \times p$ data matrix $\mtx{X}$; tuning parameter $\gamma_{crit}$ \\
\hspace*{\algorithmicindent} 
\textbf{Output:} Estimated skeleton $\widehat{\mathcal{T}}$.
\begin{algorithmic}[1]
\State Compute the sample correlations $\hat{\rho}_{ij}$ for all $1 \leq i < j \leq p$;
\State Compute the sample partial correlations $\hat{\rho}_{ij \vert k}$;
for all $1 \leq i < j \leq p$ and any $k \notin \{i, j\}$;
\State The complete undirected graph over the $p$ nodes is denoted as $\mathcal{G}_{0}$;
\For{Each pair of non-ordered $(i, j)$}
    \If{$|\hat{\rho}_{ij}|< \gamma_{crit} \sqrt{(\log p)/n}$}
       \State remove $(i, j)$ from $\mathcal{G}_{0}$
    \ElsIf{$|\hat{\rho}_{ij \vert k}|< \gamma_{crit} \sqrt{(\log p)/n}$ for some $k \notin \{i, j\}$}
       \State remove $(i, j)$ from $\mathcal{G}_{0}$
    \EndIf
\EndFor
\State The resulting graph is denoted as $\widehat{\mathcal{T}}$.
\end{algorithmic}
\label{alg:early_stopping_PC}
\end{algorithm}

Our consistency result for polytree skeleton recovery by the simplified PC algorithm demonstrated in Algorithm \ref{alg:early_stopping_PC} consists of two parts. In the first part, we show that $\widehat{\mathcal{T}} \subset \mathcal{T}$ as long as the tuning parameter $\gamma_{crit}$ in the threshold is chosen large enough; in the second part, we show that $\widehat{\mathcal{T}} = \mathcal{T}$ if we assume further that $\rho_{\min}$ satisfies some lower bound condition.

\begin{theorem}
\label{thm:skeleton_PC}
Consider a Gaussian linear SEM \eqref{eq:sem} associated to a polytree $T=(V, E)$ with $\rho_{\max} < 1 - \delta$. Let $\widehat{\mathcal{T}}$ be the estimated skeleton from the simplified PC algorithm given in Algorithm \ref{alg:early_stopping_PC} with the threshold $\gamma_{crit} \sqrt{(\log p)/n}$. 
If the tuning parameter satisfies $\gamma_{crit} > C$, where $C$ is as defined in Lemma \ref{lem:part_corr_concent}, then, on the event $\mathcal{E}$ defined in Lemma \ref{lem:part_corr_concent} with probability at least $1 - 1/p^3$, we have $\widehat{\mathcal{T}} \subset \mathcal{T}$.

In addition, if 
\begin{equation}
\label{eq:rho_min3}
n > \left(\frac{4\gamma_{crit}^2}{\delta^2}\right)\frac{\log p}{\rho_{\min}^2},
\end{equation}
we have $\widehat{\mathcal{T}} = \mathcal{T}$ on $\mathcal{E}$, i.e. the exact recovery of the polytree skeleton.
\end{theorem}

\begin{proof}
Let the event $\mathcal{E}$ be defined as in Lemma \ref{lem:part_corr_concent}. Consider any $(i, j) \notin \mathcal{T}$. If the path connecting $i$ and $j$ is not a trek, we have $\rho_{ij} = 0$ by Lemma \ref{lem:part_corr_pop}. Then Lemma \ref{lem:part_corr_concent} implies $\vert \hat{\rho}_{ij} \vert < C \sqrt{(\log p)/n} < \gamma_{crit} \sqrt{(\log p)/n}$, so $(i, j)$ is excluded from $\mathcal{G}_0$ by Algorithm \ref{alg:early_stopping_PC}. On the other hand, if the path connecting $i$ and $j$ is a trek, Lemma \ref{lem:part_corr_pop} implies that there exists some $k \notin \{i, j\}$ such that $\rho_{ij \vert k} = 0$. Then Lemma \ref{lem:part_corr_concent} implies $\vert \hat{\rho}_{ij \vert k} \vert < C\sqrt{(\log p)/n} < \gamma_{crit} \sqrt{(\log p)/n}$, so $(i, j)$ is also excluded from $\mathcal{G}_0$. This implies any $(i, j) \notin \mathcal{T}$ is removed from $\mathcal{G}_0$, no matter whether the path connecting them is a trek or not. Thus, we have $\widehat{\mathcal{T}} \subset \mathcal{T}$.

Further, the condition \eqref{eq:rho_min3} implies $\rho_{\min} > \frac{2 \gamma_{crit}}{\delta} \sqrt{\frac{\log p}{n}}$, for any $(i, j) \in \mathcal{T}$, Lemmas \ref{lem:part_corr_concent} and \ref{lem:part_corr_pop} imply
\[
\vert \hat{\rho}_{ij} \vert \geq |\rho_{ij}| - C \sqrt{\frac{\log p}{n}} \geq \rho_{\min} - C\sqrt{\frac{\log p}{n}} > \gamma_{crit} \sqrt{\frac{\log p}{n}},
\]
and for any $k \notin \{i, j\}$, 
\[
\vert \hat{\rho}_{ij \vert k} \vert \geq |\rho_{ij \vert k}| - C \sqrt{\frac{\log p}{n}} \geq \delta \rho_{\min} - C \sqrt{\frac{\log p}{n}} > \gamma_{crit} \sqrt{\frac{\log p}{n}}.
\]
This implies that $(i, j) \in \widehat{\mathcal{T}}$, so $ \widehat{\mathcal{T}} = \mathcal{T}$.
\end{proof}

\begin{remark}
Here we show that the skeleton $\mathcal{T}$ can be exactly recovered with high probability under the sample size condition \eqref{eq:rho_min3}, which is comparable to the condition \eqref{eq:rho_min1} for skeleton recovery by the Chow-Liu algorithm.
\end{remark}

Note that our simplified PC algorithm is only aimed at recovering the skeleton rather than identifying all marginal/conditional independence relationships among the variables. This is the reason why the sample size condition \eqref{eq:rho_min3} can be smaller than the necessary condition for CPDAG recovery established in Theorem \ref{thm:lower_bound}. This is the crucial difference between Theorem \ref{thm:skeleton_PC} and standard CPDAG recovery result by PC algorithm for sparse DAG learning, e.g. \cite{kalisch2007estimating}.

To understand why Algorithm \ref{alg:early_stopping_PC} may lead to consistent skeleton recovery even without identifying the marginal/conditional independence relationships correctly, one can take a trek $i \rightarrow k \rightarrow j$ in the true polytree with $|\rho_{ik}| = |\rho_{jk}| = \rho_{\min}$ as an example. In this case, Algorithm \ref{alg:early_stopping_PC} could possibly remove the edge $i - j$ simply due to $|\hat{\rho}_{ij}|< \gamma_{crit} \sqrt{(\log p)/n}$ as long as $|\rho_{ij}| = |\rho_{ik}||\rho_{jk}| = \rho_{\min}^2$ is sufficiently small. Following the idea of the PC algorithm, one may record $(X_i, X_j)$ as an independent pair of variables incorrectly. However, this may still lead to correct skeleton recovery.

To recover the CPDAG, we can naturally apply Algorithm \ref{alg:cpdag} following Algorithm \ref{alg:early_stopping_PC}. The following result is obvious and we omit the proof.

\begin{theorem}
Under the assumptions in Theorem \ref{thm:skeleton_PC}, if we further assume the sample size condition \eqref{eq:rho_min2} holds, then Algorithms \ref{alg:early_stopping_PC} and \ref{alg:cpdag} recover the true CPDAG exactly with probability at least $1 - 1/p^3$.
\end{theorem}

\section{Inverse Correlation Matrix Estimation}
\label{sec:theory_nongaussian}

In this section, we are interested in recovering the inverse correlation matrix of the polytree model under a recovered CPDAG. This is particularly useful for likelihood calculation; see, e.g. \cite{van2013c0}. This could be useful for choosing the value of tuning parameters with likelihood-based cross-validation.

To estimate the inverse correlation matrix, due to the scaling invariance of population and sample correlations, without loss of generality, we assume that all $X_i$'s have unit variances. Then the inverse correlation matrix is
$\mtx{\Theta} \coloneqq \mtx{\Sigma}^{-1} = (\mtx{I} - \mtx{B}) \mtx{\Omega}^{-1} (\mtx{I} - \mtx{B}^\top)$. The major goal of this subsection is to study how well we can estimate $\mtx{\Theta}$.

\subsection{Inverse Correlation Matrix and CPDAG}
At first, let's choose one realization from the equivalence class represented by this CPDAG, and still refer to it as $T$ with no confusion. By $\mtx{\Theta} = (\mtx{I} - \mtx{B}) \mtx{\Omega}^{-1} (\mtx{I} - \mtx{B}^\top)$, and the fact $\beta_{ij} = \rho_{ij}$ for each $i \rightarrow j \in T$, $\beta_{ij}$ due to unit variances,  we can represent the entries of the inverse correlation matrix by the correlation coefficients over the polytree as 
\begin{align}
	\theta_{ij} &= 
	\left\{
	  \begin{array}{ll}
      -{\rho_{ij}}/{\omega_{jj}} & \text{ if $i\rightarrow j \in T$} 
      \\
      -{\rho_{ji}}/{\omega_{ii}} & \text{ if $j\rightarrow i \in T$}
      \\
      {\rho_{ik}\rho_{jk}}/{\omega_{kk}} & \text{ if $i\rightarrow k \leftarrow j \in T$}
      \\
      0 & \text{ otherwise,}
    \end{array}
    \right. \text{  for  } i\neq j
    \\
    \theta_{jj} &= \frac{1}{\omega_{jj}} + \sum\limits_{k\in Ch(j)}\frac{\rho_{jk}^2}{\omega_{kk}}, \text{  for  } j = 1,\ldots p.
    \label{eq:inv_corr_diag_T}
\end{align}
where $\omega_{jj} = 1 - \sum\limits_{i\in Pa(j)}\rho_{ij}^{2}$ for $j = 1,\ldots, p$. Notice that the $k$ in $i\rightarrow k \leftarrow j \in T$ must be unique in a polytree.

A natural question is whether we can represent the inverse correlation matrix only through the CPDAG $C_T$. This question is important given we can only hope to recover $C_T$ by the algorithms introduced in Sections \ref{subsec:skeleton} and \ref{subsec:equi}. We first give a useful lemma, which explains for what kind of node $j$, the noise variance $\omega_{jj}=1-\sum\limits_{i\in Pa(j)}\rho_{ij}^2$ is well-defined on the CPDAG $C_T$, i.e., invariant to any particular polytree chosen from the equivalence class.

\begin{lemma}
\label{lem:CPDAG_fact}
Denote by $C_T$ the true CPDAG of the polytree $T$. We denote by $V_m$ the collection of nodes $j$ such that there is at least one undirected edge $i-j$ in $C_T$. On the other hand, we denote $V_d$ the collection of nodes $j$ such that all its neighbors are connected to it with a directed edge in $C_T$. This means that $V_m$ and $V_d$ form a partition of all nodes. Then, we have the following properties:
\begin{enumerate}
    \item For each $j \in V_m$, there is no $i$ satisfying $i \rightarrow j \in C_T$.
    \item For each $j \in V_m$ and any polytree $T'$ within the equivalence class $C_T$, $j$ has at most one parent in $T'$.
    \item For each $j \in V_d$, since the set of parents of $j$ is determined by the CPDAG $C_T$, the corresponding noise variance  $\omega_{jj}=1-\sum\limits_{i\in Pa(j)}\rho_{ij}^2$ is well-defined.
    \item Combining the third property and the contrapositive of the first property, we know for each $i \rightarrow j \in C_T$, we have $j \in V_d$, and the corresponding noise variance $\omega_{jj}$ is thereby well-defined.
\end{enumerate}
\end{lemma}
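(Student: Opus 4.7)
The plan is to lean on \cref{thm:CPDAG_polytree}: the undirected portion of $C_T$ is a forest, and each equivalent polytree is obtained by independently choosing a root for each undirected tree component and orienting its edges away from that root. This structural description will let me show that undirected edges at a node are incompatible with directed in-edges at the same node, from which the remaining claims follow easily.

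For Property~1, I argue by contradiction. Suppose $j \in V_m$ carries both an undirected edge $i - j$ and a directed edge $i' \to j$ in $C_T$. Using \cref{thm:CPDAG_polytree}, rooting the undirected tree component containing $j$ at $i$ produces an equivalent polytree $T_1$ in which that undirected edge is oriented $i \to j$; rooting instead at $j$ produces an equivalent polytree $T_2$ in which the same edge is oriented $j \to i$. Because the polytree skeleton is acyclic as an undirected graph, $i$ and $i'$ cannot be adjacent (otherwise $i, i', j$ would close a $3$-cycle in the skeleton). Hence $T_1$ contains the v-structure $i \to j \leftarrow i'$. Since Markov-equivalent DAGs share the same v-structures, $T_2$ must contain the same v-structure, but $T_2$ has $j \to i$, a contradiction.

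Property~2 follows immediately: by Property~1, $j \in V_m$ has no incoming directed edge in $C_T$, so any parent of $j$ in an equivalent polytree $T'$ must arise by orienting an undirected edge incident to $j$. All such undirected edges lie in the single forest component containing $j$, and by \cref{thm:CPDAG_polytree} this component is oriented as a rooted tree in $T'$, so $j$ has at most one parent in $T'$.

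Properties~3 and~4 are essentially bookkeeping. If $j \in V_d$, then every edge incident to $j$ is directed in $C_T$ and therefore fixed across the equivalence class, so both $Pa(j)$ and the expression $\omega_{jj} = 1 - \sum_{i \in Pa(j)} \rho_{ij}^2$ are invariants of $C_T$. Property~4 is the contrapositive of Property~1 combined with Property~3: if $i \to j \in C_T$, then $j \notin V_m$, so $j \in V_d$, and hence $\omega_{jj}$ is well-defined. The only delicate step in the whole argument is verifying the non-adjacency of $i$ and $i'$ in order to legitimately invoke v-structure invariance; since this is immediate from the tree-skeleton property of a polytree, I anticipate no real obstacle.
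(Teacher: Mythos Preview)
Your proof is correct and follows essentially the same v-structure-invariance reasoning the paper alludes to (the paper omits the argument entirely, noting only that the lemma ``can be directly implied by the fact that v-structures are kept unchanged'' across the equivalence class). Your detour through \cref{thm:CPDAG_polytree} is valid but slightly roundabout: the same contradiction for Property~1 follows more directly from the definition of an undirected CPDAG edge (both orientations appear in some equivalent polytree), or by observing that the CPDAG is closed under Meek's Rule~1---indeed, that observation is exactly the step in the proof of \cref{thm:CPDAG_polytree} that you are implicitly invoking.
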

We omit the proof since this result can be directly implied by the fact that v-structures are kept unchanged in all polytrees within the equivalence class determined by $C_T$. Then the following result shows that the inverse correlation matrix can be represented by the pairwise correlations on the skeleton as well as the CPDAG.

\begin{lemma}
\label{lem:inv_corr_CPDAG}
Let $V_m$ and $V_d$ be the partition of all nodes defined in Lemma \ref{lem:CPDAG_fact}. Then, the inverse correlation matrix can be represented as 
\begin{equation*}
	\theta_{ij} = 
	\left\{
      \begin{array}{ll}
      -\rho_{ij}/\omega_{jj} & \text{ if $i\rightarrow j \in C_T$} 
      \\
      -\rho_{ji}/\omega_{ii} & \text{ if $j\rightarrow i \in C_T$}
      \\
      -\rho_{ij}/(1- \rho_{ij}^2) & \text{ if $i-j \in C_T$}
      \\      
      \rho_{ik} \rho_{jk}/\omega_{kk} & \text{ if $i\rightarrow k \leftarrow j \in C_T$}
      \\
      0 & \text{ otherwise,}
    \end{array}
    \right. \text{  for  } i\neq j
\end{equation*}
and
\begin{equation*}
    \theta_{jj} = 
    \begin{cases}
    \frac{1}{\omega_{jj}} + \sum\limits_{j \rightarrow k \in C_T}\frac{\rho_{jk}^2}{\omega_{kk}}, &\quad j \in V_d,
    \\
    1+ \sum\limits_{j-k \in C_T} \frac{\rho_{jk}^2}{1-\rho_{jk}^2} + \sum\limits_{j \rightarrow k \in C_T}\frac{\rho_{jk}^2}{\omega_{kk}}, &\quad j \in V_m.
    \end{cases}
\end{equation*}
Here $\omega_{jj}=1-\sum\limits_{i\in Pa(j)} \rho_{ij}^2$ is well-defined in all of the above formulas, since $Pa(j)$ is well-defined for any $j \in V_d$.
\end{lemma}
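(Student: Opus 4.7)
The plan is to derive each entry of the stated CPDAG formula directly from the polytree-based representation displayed just above the lemma, by exploiting the fact that $\mtx{\Theta}$ depends only on the joint distribution, hence only on the equivalence class; this frees me to pick a convenient polytree $T$ out of the class represented by $C_T$ whenever that simplifies the computation. The case translation is driven by the four structural properties collected in Lemma~\ref{lem:CPDAG_fact} together with the forest structure of the undirected part of $C_T$ given in Theorem~\ref{thm:CPDAG_polytree}.

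For the off-diagonal entries I would proceed by cases on the relationship between $i$ and $j$ in $C_T$. If $i\to j\in C_T$ or $j\to i\in C_T$, the orientation is shared by every polytree in the class, and by property 4 of Lemma~\ref{lem:CPDAG_fact} the relevant noise variance $\omega_{jj}$ or $\omega_{ii}$ is well-defined on $C_T$, so the formulas $-\rho_{ij}/\omega_{jj}$ and $-\rho_{ji}/\omega_{ii}$ transfer verbatim. V-structures $i\to k\leftarrow j\in C_T$ are preserved in every polytree in the class, and $\omega_{kk}$ is well-defined by property 4, so $\rho_{ik}\rho_{jk}/\omega_{kk}$ also transfers. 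The only non-trivial case is $i-j\in C_T$: here I fix any $T$ in the class and may assume without loss of generality that the edge is oriented $i\to j$ in $T$. Then $j\in V_m$, and by property 2 of Lemma~\ref{lem:CPDAG_fact} the node $j$ has at most one parent in $T$, which must be $i$; consequently $\omega_{jj}=1-\rho_{ij}^{2}$ and the polytree formula yields $\theta_{ij}=-\rho_{ij}/(1-\rho_{ij}^{2})$, as required. The residual ``otherwise'' case translates directly because the only way for two non-adjacent nodes to share a common child in some $T$ is via a v-structure, which must already be visible in $C_T$.

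For the diagonal entries I would split by whether $j\in V_d$ or $j\in V_m$. When $j\in V_d$, all $C_T$-neighbors of $j$ are connected by directed edges, so $Pa(j)$ is unambiguous (hence $\omega_{jj}$ is well-defined) and each child $k$ of $j$ in $C_T$ satisfies $k\in V_d$ by the contrapositive of property 1, giving a well-defined $\omega_{kk}$; the polytree formula then reads off the CPDAG. When $j\in V_m$, I invoke Theorem~\ref{thm:CPDAG_polytree} to pick a polytree $T$ in the equivalence class by rooting the undirected-tree component that contains $j$ at $j$ itself. In this $T$, node $j$ has no parent (so $\omega_{jj}=1$), each undirected $C_T$-neighbor $k$ of $j$ becomes a child of $j$ in $T$ whose sole parent is $j$ (so $\omega_{kk}=1-\rho_{jk}^{2}$), and each directed $C_T$-child $k$ of $j$ keeps its well-defined $\omega_{kk}$. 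Substituting into $\theta_{jj}=1/\omega_{jj}+\sum_{k\in Ch(j)}\rho_{jk}^{2}/\omega_{kk}$ reproduces the stated expression.

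The main obstacle is making sure that the case-by-case translation genuinely does not depend on the representative polytree $T$ chosen from the equivalence class. For the off-diagonal undirected case the worry is that the alternative orientation $j\to i$ would give $-\rho_{ij}/\omega_{ii}$ with $\omega_{ii}=1-\rho_{ij}^{2}$, which one must verify equals the expression obtained from the other orientation; for the diagonal $V_m$ case, choosing a representative in which $j$ does have a parent $q$ (necessarily along an undirected $C_T$-edge $j-q$) appears to give a different expression, and one must use the elementary identity $1/(1-\rho_{jq}^{2}) = 1 + \rho_{jq}^{2}/(1-\rho_{jq}^{2})$ to absorb the $1/\omega_{jj}$ term back into the sum over undirected neighbors and recover the stated formula. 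I would include this identity explicitly so that the independence from the choice of representative is transparent.
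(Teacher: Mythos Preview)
Your proposal is correct and follows essentially the same route as the paper, which merely sketches the argument: invoke Lemma~\ref{lem:CPDAG_fact} for the cases where the CPDAG already determines the relevant $\omega$'s, and for $j\in V_m$ verify that the ``zero-parent'' and ``one-parent'' representatives yield the same diagonal formula. Your use of Theorem~\ref{thm:CPDAG_polytree} to root the undirected component at $j$ (forcing the zero-parent case) is a clean way to organize the computation that the paper does not spell out, and your explicit check via the identity $1/(1-\rho_{jq}^{2})=1+\rho_{jq}^{2}/(1-\rho_{jq}^{2})$ is exactly the consistency verification the paper alludes to but omits.
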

The result can be obtained relatively straightforwardly by the facts listed in Lemma \ref{lem:CPDAG_fact}. How to obtain the formula of $\theta_{jj}$ for $j\in V_m$ from \eqref{eq:inv_corr_diag_T} may not be too obvious, since for polytree corresponding to $C_T$, $j$ may have one or zero parent. It turns out that these two cases lead to the same formula given in the lemma. We omit the detailed proof.

\subsection{Inverse Correlation Matrix Estimation}
By Lemma \ref{lem:inv_corr_CPDAG}, we can give an estimate of the inverse correlation matrix by the estimated CPDAG $\widehat{C}_T$, sample correlations over the estimated tree skeleton, and estimated noise variance for each $j \in \widehat{V}_d$:
\begin{equation}
\label{eq:inv_corr_est_off}
	\hat{\theta}_{ij} = 
	\left\{
      \begin{array}{ll}
      -{\hat{\rho}_{ij}}/{\hat{\omega}_{jj}} & \text{ if $i\rightarrow j \in \widehat{C}_T$} 
      \\
      -{\hat{\rho}_{ji}}/{\hat{\omega}_{ii}} & \text{ if $j\rightarrow i \in \widehat{C}_T$}
      \\
      -{\hat{\rho}_{ij}}/{(1-\hat{\rho}_{ij}^2)} & \text{ if $i-j \in \widehat{C}_T$}
      \\      
      {\hat{\rho}_{ik}\hat{\rho}_{jk}}/{\hat{\omega}_{kk}} & \text{ if $i\rightarrow k \leftarrow j \in \widehat{C}_T$}
      \\
      0 & \text{ otherwise,}
    \end{array}
    \right. \text{  for  } i\neq j
\end{equation}
and
\begin{equation}
\label{eq:inv_corr_est_diag}
    \hat{\theta}_{jj} = 
    \begin{cases}
    \frac{1}{\hat{\omega}_{jj}} + \sum\limits_{j \rightarrow k \in \widehat{C}_T}\frac{\hat{\rho}_{jk}^2}{\hat{\omega}_{kk}}, &\quad j \in \widehat{V}_d,
    \\
    1+ \sum\limits_{j-k \in \widehat{C}_T} \frac{\hat{\rho}_{jk}^2}{1-\hat{\rho}_{jk}^2} + \sum\limits_{j \rightarrow k \in \widehat{C}_T}\frac{\hat{\rho}_{jk}^2}{\hat{\omega}_{kk}}, &\quad j \in \widehat{V}_m.
    \end{cases}
\end{equation}
Here, $\widehat{V}_d$ and $\widehat{V}_m$ are similarly defined through the estimated CPDAG $\widehat{C}_T$ as in Lemma \ref{lem:CPDAG_fact}.

The CPDAG can be estimated through Algorithms \ref{alg:chow-liu} and \ref{alg:cpdag}. The sample correlations over the estimated skeleton can also be naturally defined. The remaining question is how to estimate noise variances in $\widehat{V}_d$. One natural method is to estimate $\omega_{jj}$ based on \eqref{eq:noise_var}: $\hat{\omega}_{jj}=1-\sum\limits_{i\in \widehat{Pa}(j)}\hat{\rho}_{ij}^2$ for each $j \in \widehat{V}_d$, where $\widehat{Pa}(j)$ is the corresponding estimated parent set. However, the statistical property of this estimator is not easy to derive.

Instead, we propose to estimate the noise variance through the standard unbiased mean squared error, denoted as $\widehat{\omega}_{jj} = MSE_j$, of the least squares fit for the linear model
\[
X_j = \sum\limits_{i\in \widehat{Pa}(j)}\beta_{ij}X_i + \epsilon_j.
\]

Under the Gaussian assumption, if $\widehat{C}_T = C_T$, we have $\widehat{V}_d = V_d$ and $\widehat{Pa}(j) = Pa(j)$ for each $j \in V_d$. Then, it is well-known that the least-squares MSE
$\widehat{\omega}_{jj}$ is an unbiased estimate of $\omega_{jj}$. In fact, there holds
\begin{equation}
\label{eq:omega_hat}
\widehat{\omega}_{jj} \stackrel{d}{=} \omega_{jj} \left(\frac{\chi^2_{n - d_j^{in}}}{n - d_j^{in}}\right), \quad \forall j \in V_d
\end{equation}
where $d_j^{in}$ is the in-degree of node $j$, i.e. $d_j^{in} = |Pa(j)|$.

Finally, we introduce our result regarding the estimation error bounds of inverse correlation matrix estimation defined above.
\begin{theorem}
\label{thm:inv_corr_est}
Consider the linear polytree SEM \eqref{eq:sem} associated with a polytree $T=(V, E)$, where all variables have unit variances. 
Denote the minimum noise variance as 
\[
\omega_{\min} = \min \{\omega_{jj}: j \in V_d\} \wedge \min \{1-\rho^2_{ij}: i-j \in C_T\}. 
\]
Denote $\nu$ as the total number of v-structures. It is easy to verify that all of these concepts only depend on the CPDAG $C_T$. We make the assumption that 
\[
\rho_{\max} \leq 1- \delta 
\quad
\text{and} 
\quad
\omega_{\min}\geq \delta 
\]
for some constant $\delta>0$.

Assume that the estimated CPDAG $\widehat{C}_T$ is obtained by Algorithms \ref{alg:chow-liu} and \ref{alg:cpdag} with threshold $\gamma_{crit} \sqrt{(\log p)/n}$. If we assume  \eqref{eq:rho_min2} in Theorem \ref{thm:cpdag} holds, i.e.
\[
\gamma_{crit} > C
\quad \text{and} \quad
n>\widetilde{C} \left(\frac{\log p}{\rho_{\min}^4}\right),
\]
where $C$ is as defined in Lemma \ref{lem:corr_concent}. Then, with probability at least $1 - p^{-2}$, the estimated inverse correlation matrix defined in \eqref{eq:inv_corr_est_off} and \eqref{eq:inv_corr_est_diag} satisfies
\begin{equation}
\label{eq:inverse_corr_ell1_bound}
\|\widehat{\mtx{\Theta}} - \mtx{\Theta} \|_{\ell_1} \leq \widetilde{C} \left(p + \nu\right) \sqrt{\frac{\log p}{n}}.
\end{equation}
In the above, $\widetilde{C}$ represents some constant that only depends on $\delta$ and $\gamma_{crit}$, whose value changes from line to line.
\end{theorem}

\begin{proof}
In the following, we use $\widetilde{C}$ to represent a constant that only depends on $\delta$ and $\gamma_{crit}$, whose value can change from line to line. On the other hand, $C$ represents some absolute constant whose value changes from line to line.

Given \eqref{eq:rho_min2} in Theorem \ref{thm:cpdag} holds, with probability at least $1-p^{-3}$, we have $\|\widehat{\mtx{\rho}} - \mtx{\rho} \|_{\max} < C \sqrt{\frac{\log p}{n}}$, and the true CPDAG is exactly recovered by, i.e. $\widehat{C}_T = C_T$. Consequently, the estimated noise variances satisfy \eqref{eq:omega_hat}.

Denote the maximum in-degree as 
\[
d_* = \max\{d_j^{in}: j \in V_d\} \vee 1. 
\]
From Corollary \ref{thm:rho_min_degree}, we have $\rho_{\min} < 1/\sqrt{d_{*}}$. Then the assumption $n>\widetilde{C} \left((\log p) / \rho_{\min}^4 \right)$ implies $n > \widetilde{C} d_*^2 \log p$. Then, based on concentration inequalities for Chi-square random variables, \eqref{eq:omega_hat} implies that with probability at least $1 - p^{-3}$,
\[
\max_{j \in V_d} |\widehat{\omega}_{jj} - \omega_{jj}| \leq C \sqrt{\frac{\log p}{n}}
\]
for some absolute constant $C$.

With the above concentrations of $\widehat{\omega}_{jj}$'s and $\widehat{\rho}_{ij}$'s, based on the formula \eqref{eq:inv_corr_est_off} and the assumption $\omega_{\min}\geq \delta$, we have for any $i \neq j$,
\begin{equation}
\label{eq:inv_corr_est_off_concentr} 
\left\{
\begin{array}{ll}
|\hat{\theta}_{ij} - \theta_{ij} | \leq  \widetilde{C} \sqrt{\frac{\log p}{n}} & \text{ if $(i, j) \in \mathcal{T}$ or $i\rightarrow k \leftarrow j \in C_T$ for some $k$}  
\\
|\hat{\theta}_{ij} - \theta_{ij} | = 0 & \text{ otherwise,}
\end{array}
\right. 
\end{equation}
which further implies 
\[
\sum_{i\neq j} |\hat{\theta}_{ij} - \theta_{ij}| \leq \widetilde{C} \left(p + \nu\right) \sqrt{\frac{\log p}{n}}.
\]

Further, \eqref{eq:inv_corr_est_diag} implies, for each $j=1, \ldots, p$, there holds
\[
|\widehat{\theta}_{jj} - \theta_{jj}| \leq \widetilde{C}(1+d_j) \sqrt{\frac{\log p}{n}},
\]
where $d_j$ is the degree of node $j$ in the skeleton $\mathcal{T}$. This implies 
\[
\sum_{j =1}^p |\hat{\theta}_{jj} - \theta_{jj}| \leq \widetilde{C} p \sqrt{\frac{\log p}{n}}.
\]
Putting the above results together, we get \eqref{eq:inverse_corr_ell1_bound}.
\end{proof}

\section{Extension to Group Polytree Linear Structural Equation Models}
In this section, we consider an extension of the linear polytree structural equation model \eqref{eq:sem0} to the case of variable groups. Assume the random vector $\vct{x} = [X_1, \ldots, X_p]^\top$ is partitioned into $p$ groups:
\[
\vct{x}^\top = [\vct{x}_1^\top, \ldots, \vct{x}_p^\top],
\]
where $\vct{x}_i$ is a $l_i$-dimensional random vector. We consider the following group linear polytree structural equation model over $T = (V, E)$:
\begin{equation}
\label{eq:sem0_group}
\vec{X}_j = \sum_{i=1}^p \mtx{B}_{ij}^\top \vec{X}_i + \vec{\epsilon}_j = \sum_{i \in Pa(j)} \mtx{B}_{ij}^{\top} \vec{X}_i + \vct{\epsilon}_j, \quad \text{for~} j = 1, \ldots, p, 
\end{equation}
where the $l_i \times l_j$ matrix $\mtx{B}_{ij} \neq \mtx{0}$ if and only if $i\rightarrow j \in E$. Also, assume all $\epsilon_j$'s are independent multivariate normal random vectors. If we denote
\begin{equation*}
\mtx{B} =
\begin{bmatrix} 
\mtx{0} & \mtx{B}_{12} & \ldots & \mtx{B}_{1p} 
\\ 
\mtx{B}_{21} & \mtx{0} & \ldots & \mtx{B}_{2p} 
\\ 
\vdots & \vdots & \ddots & \vdots 
\\ 
\mtx{B}_{p1} & \mtx{B}_{p2} & \ldots & \mtx{0} 
\end{bmatrix}
\end{equation*}
$\vct{\epsilon}^\top = [\vct{\epsilon}_1^\top, \ldots, \vct{\epsilon}_p^\top]$. Then the SEM can still be represented as $\vec{x} = \mtx{B}^\top\vec{x} + \vec{\epsilon}$. Denote the covariance matrices
\begin{equation*}
\Cov(\vct{x}) = \mtx{\Sigma} =
\begin{bmatrix} 
\mtx{\Sigma}_{11} & \mtx{\Sigma}_{12} & \ldots & \mtx{\Sigma}_{1p} 
\\ 
\mtx{\Sigma}_{21} & \mtx{\Sigma}_{22} & \ldots & \mtx{\Sigma}_{2p} 
\\ 
\vdots & \vdots & \ddots & \vdots 
\\ 
\mtx{\Sigma}_{p1} & \mtx{\Sigma}_{p2} & \ldots & \mtx{\Sigma}_{pp} 
\end{bmatrix}
\quad
\text{and}
\quad
\Cov(\vct{\epsilon}) = \mtx{\Omega} =
\begin{bmatrix} 
\mtx{\Omega}_{11} & \mtx{0} & \ldots & \mtx{0}
\\ 
\mtx{0} & \mtx{\Omega}_{22} & \ldots & \mtx{0}
\\ 
\vdots & \vdots & \ddots & \vdots 
\\ 
\mtx{0} & \mtx{0} & \ldots & \mtx{\Omega}_{pp} 
\end{bmatrix},
\end{equation*}
we still have $\mtx{\Sigma} = (\mtx{I}- \mtx{B})^{-\top}  \mtx{\Omega} (\mtx{I}- \mtx{B})^{-1}$. Again, our goal is still to recover the polytree structural $T=(V, E)$ from $n$ i.i.d observation of $\vct{x}$: $\vct{x}_1, \ldots, \vct{x}_n$.

Our algorithm for group polytree learning is similar to that introduced in Section \ref{sec:model_method}, with the pairwise sample correlations $\hat{\rho}_{ij}$ replaced with the leading sample canonical correlations between $\vct{X}_i$ and $\vct{X}_j$. To be specific, denote $\mtx{R}_{ij} = \mtx{\Sigma}_{ii}^{-\frac{1}{2}} \mtx{\Sigma}_{ij} \mtx{\Sigma}_{jj}^{-\frac{1}{2}} \in \mathbb{R}^{l_i \times l_j}$, then $\rho_{ij} \coloneqq \|\mtx{R}_{ij}\|$ is the leading population canonical correlation coefficient between $\vct{X}_i$ and $\vct{X}_j$. Correspondingly, denote 
the sample version $\widehat{\mtx{R}}_{ij} = \widehat{\mtx{\Sigma}}_{ii}^{-\frac{1}{2}} \widehat{\mtx{\Sigma}}_{ij} \widehat{\mtx{\Sigma}}_{jj}^{-\frac{1}{2}}$, with the leading sample canonical correlation coefficient $\hat{\rho}_{ij} \coloneqq \|\widehat{\mtx{R}}_{ij}\|$. Then we apply Algorithm \ref{alg:chow-liu} to recover the polytree skeleton and Algorithm \ref{alg:cpdag} to recover the CPDAG, where $\hat{\rho}_{ij}$'s represent leading sample canonical correlations.

In analogy, we also denote $\eta_{ij}=\sigma_{\min}(\mtx{R}_{ij})$ as pairwise least population canonical correlation coefficients. These quantities will not be employed in the algorithms, but will be used in our theoretical result of CPDAG recovery.

In the sequel, we aim to extend the consistency results Theorems \ref{thm:skeleton} and \ref{thm:cpdag} to the case of group polytree SEM. Since the sample canonical correlations are linear invariant, in theory, we can assume $\vct{X}_i \sim \mathcal{N}(\vct{0}, \mtx{I}_{l_i})$ for each $i=1, \ldots, p$ without loss of generality. In fact, if we replace $\vct{x}_i$ with $\widetilde{\vct{X}}_i = \mtx{\Sigma}_{ii}^{-\frac{1}{2}} \vct{X}_i$, then
\[
\widetilde{\vec{X}}_j = \sum_{i \in Pa(j)} \left(\mtx{\Sigma}_{jj}^{-\frac{1}{2}}\mtx{B}_{ij}^{\top} \mtx{\Sigma}_{ii}^{\frac{1}{2}}\right) \widetilde{\vec{X}}_i + \mtx{\Sigma}_{jj}^{-\frac{1}{2}} \vct{\epsilon}_j, \quad \text{for~} j = 1, \ldots, p, 
\]
which shares the same polytree structure.

Before presenting our consistency results, we also need two lemmas. The first lemma is an extension of Lemma \ref{cor:correlation_decay}.

\begin{lemma}
\label{cor:correlation_decay_group}
Consider the Gaussian group linear polytree model \eqref{eq:sem0_group} with the associated polytree $T = (V, E)$. For each pair $(i, j)$, if the path connecting $i$ and $j$ is not a trek, we have $\rho_{ij} = 0$; if the path connecting $i$ and $j$ is a trek, we have 
\[
\prod\limits_{(s, t)\in \tau_{ij}} \eta_{st} \leq \rho_{ij} \leq \prod\limits_{(s, t)\in \tau_{ij}} \rho_{st}.
\]
\end{lemma}
\begin{proof}
Since population canonical correlations are linearly invariant, we can assume $\vct{X}_i \sim \mathcal{N}(\vct{0}, \mtx{I}_{l_i})$ for each $i=1, \ldots, p$ WLOG. Then, for each $i \rightarrow j \in E$, one can easily obtain
\[
\mtx{R}_{ij} = \mtx{B}_{ij}, \quad \mtx{R}_{ji} = \mtx{B}_{ij}^\top.
\]
Further, one can easily use argument of induction to show that for each pair $(i, j)$, if the path connecting $i$ and $j$ is not a trek, we have $\mtx{R}_{ij} = \mtx{0}$ and thereby $\rho_{ij} = 0$; if the path connecting $i$ and $j$ is a trek, denoted as
\[
\tau_{ij}: i = v_{l}^{L} \leftarrow v_{l-1}^{L} \leftarrow \cdots \leftarrow v_{1}^{L}\leftarrow v_0 \rightarrow v_{1}^{R} \rightarrow \cdots \rightarrow v_{r-1}^{R}\rightarrow v_{r}^{R} = j,
\]
we have 
\[
\mtx{R}_{ij} = \mtx{B}_{v_{l-1}^L v_l^L}^\top \mtx{B}_{v_{l-2}^L v_{l-1}^L }^\top \cdots \mtx{B}_{v_0 v_1^L}^\top \mtx{B}_{v_0 v_1^{R}}\cdots\mtx{B}_{v_{r-2}^R v_{r-1}^R} \mtx{B}_{v_{r-1}^R v_r^R},
\]
which further implies
\[
\mtx{R}_{ij} = \mtx{R}_{v_l^L v_{l-1}^L} \mtx{R}_{v_{l-1}^L v_{l-2}^L} \cdots \mtx{R}_{v_1^L v_0}\mtx{R}_{v_0 v_1^{R}}\cdots\mtx{R}_{v_{r-2}^R v_{r-1}^R} \mtx{R}_{v_{r-1}^R v_r^R}. 
\]
The fact $\rho_{ij} = \|\mtx{R}_{ij}\|$ implies $\rho_{ij} \leq \prod\limits_{(s, t)\in \tau_{ij}} \rho_{st}$.
\end{proof}

Another lemma is an extension of Lemma \ref{lem:corr_concent}:
\begin{lemma}
\label{lem:corr_concent_group}
Consider a Gaussian group linear SEM \eqref{eq:sem0_group} with $n \geq C_0 (l_{\max}+ \log p)$ for some absolute constant $C_0$, where $l_{\max} = \max_{1 \leq i \leq p}l_i$. Then, on an event $\mathcal{E}$ with probability at least $1 - 1/p^3$, the following inequality holds for some absolute constant $C$:
\begin{equation}
\label{eq:concentration_cca}
\max_{1 \leq i < j \leq p} |\hat{\rho}_{ij} - \rho_{ij}| < C \sqrt{\frac{l_{max} + \log p}{n}},
\end{equation}
where $\hat{\rho}_{ij}$ and $\rho_{ij}$ are sample and population leading canonical correlation coefficients between $\vct{X}_i$ and $\vct{X}_j$, respectively.
\end{lemma}

\begin{proof}
Again, since population and sample canonical correlations are linearly invariant, we can assume $\vct{X}_i \sim \mathcal{N}(\vct{0}, \mtx{I}_{l_i})$ for each $i=1, \ldots, p$ WLOG. By the Remark 5.40 of \cite{vershynin2012introduction}, with $n \geq C_0 (l_{\max}+ \log p)$ for some sufficiently large constant $C_0$, one can show that with probability at least $1-1/p^3$, we have
\[
\max_{1 \leq i \leq p} \|\widehat{\mtx{\Sigma}}_{ii} - \mtx{I}_{l_i}\| < C \sqrt{\frac{l_{max} + \log p}{n}}
\]
and
\[
\max_{1 \leq i < j\leq p} \|\widehat{\mtx{\Sigma}}_{ij} - \mtx{\Sigma}_{ij}\| < C \sqrt{\frac{l_{max} + \log p}{n}},
\]
which further implies
\[
\max_{1 \leq i < j\leq p} \|\widehat{\mtx{R}}_{ij} - \mtx{R}_{ij}\| < C \sqrt{\frac{l_{max} + \log p}{n}}.
\]
Similar arguments can also be found in \cite{ma2020subspace}. Then \eqref{eq:concentration_cca} follows from the fact 
\[
|\hat{\rho}_{ij} - \rho_{ij}| = \vert\|\widehat{\mtx{R}}_{ij}\| - \|\mtx{R}_{ij}\|\vert \leq \|\widehat{\mtx{R}}_{ij} - \mtx{R}_{ij}\|.
\]
\end{proof}

With the above lemmas, the consistency of the recovery of polytree skeleton and CPDAG by Algorithms \ref{alg:chow-liu} and \ref{alg:cpdag} with sample correlations replaced with sample leading canonical correlation coefficients can be straightforwardly established.

\begin{theorem}
\label{thm:consistency_group}

Consider a Gaussian group linear SEM \eqref{eq:sem0_group} associated to a polytree $T=(V, E)$. Denote the minimum population leading canonical correlation coefficient, maximum population leading canonical correlation coefficient, and minimum population least canonical correlation coefficient over the tree skeleton as 
\[
\rho_{\min} \coloneqq \min_{i \rightarrow j \in E}|\rho_{ij}|, 
\quad 
\rho_{\max} \coloneqq \max_{i \rightarrow j \in E}|\rho_{ij}|, 
\quad \text{and} \quad
\eta_{\min} \coloneqq \min_{i \rightarrow j \in E}|\eta_{ij}|.
\]
Assume $\rho_{\max}<1 - \delta$ for some constant $\delta$. Denote by $\widehat{\mathcal{T}}$ the estimated skeleton by the Chow-Liu algorithm (Algorithm \ref{alg:chow-liu}), and by $\mathcal{T}$ the true polytree skeleton. Then, with probability at least $1 - 1/p^3$, we have exact polytree skeleton recovery $\widehat{\mathcal{T}} = \mathcal{T}$ as long as 
\begin{equation*}
n > C_0(\delta)\left(\frac{l_{\max}+\log p}{\rho_{\min}^2}\right)
\end{equation*}
where $C_0(\delta)$ is a constant only depending on $\delta$.

Further, denote by $C_T$ the true polytree CPDAG, and by $\widehat{C}_T$ the estimated CPDAG from Algorithm \ref{alg:cpdag} with threshold $\gamma_{crit} \sqrt{(l_{\max} + \log p)/n}$. If $\eta_{\min}>0$, then, with probability at least $1 - 1/p^3$, we have $\widehat{C}_T = C_T$ as long as 
\begin{equation*}
\gamma_{crit} > C
\quad \text{and} \quad
n>C_0(\delta) \gamma_{crit}^2 \left(\frac{l_{\max}+\log p}{\eta_{\min}^4}\right),
\end{equation*}
where $C$ is an absolute constant, and $C_0(\delta)$ is a constant only depending on $\delta$. 
\end{theorem}
\begin{proof}
The proof is exactly the same as those of Theorems \ref{thm:skeleton} and \ref{thm:cpdag}. Note that the sample size condition for CPDAG recovery relies on the minimum least population canonical correlation coefficient. In fact, if the ground truth is  $i\leftarrow k \leftarrow j$ or $i\leftarrow k \rightarrow j$ or  $i\rightarrow k \rightarrow j$, Lemma \ref{cor:correlation_decay_group} only implies that $|\rho_{ij}| \geq \eta_{\min}^2$.
\end{proof}

\begin{remark}
Our CPDAG recovery result for group polytree SEM relies on the assumption $\eta_{\min}>0$. At this moment, we don't know whether this is a necessary condition, and we plan to investigate this in future work
\end{remark}

\section{Numerical Experiments}
\label{sec:experiment}
To illustrate the feasibility and quantitative performance of the polytree learning method based on the Chow-Liu algorithm, we implement Algorithms \ref{alg:chow-liu} and \ref{alg:cpdag} in Python and test on simulated data (\cref{sec:simulated_polytree}). We further test on commonly used benchmark datasets (\cref{sec:benchmark}) to assess the robustness and applicability to real-world data.
In all experiments, we set the threshold $\rho_{\text{crit}}$ (\cref{alg:cpdag}) for rejecting a pair of nodes being independent based on the testing zero correlation for Gaussian distributions. Specifically, $\rho_{crit} = \sqrt{1 - \frac{1}{1+t_{{\alpha/2}}^2/(n-2)}}$, where $t_{\alpha/2}$ is the $1- \alpha/2$ quantile of a t-distribution with $df=n-2$, and we use $\alpha=0.1$. 
For comparisons, we run these same data using two basic and representative structural learning methods: the score-based hill climbing \citep{Gamez2011}, and the constraint-based PC algorithm \citep{spirtes2000causation} along with its early-stopping adaptation to polytree (Algorithm \ref{alg:early_stopping_PC}).
We use R implementations of the hill climbing and the PC algorithm from \verb|bnlearn| and \verb|pcalg| packages, respectively, along with all the default options and parameters. We implemented the polytree-adapted PC algorithm in Python. An $\alpha=0.01$ is used for the PC algorithm as recommended in \cite{kalisch2007estimating}.
All codes are available at \verb|https://github.com/huyu00/linear-polytree-SEM|.

We assess the results by comparing the true and inferred CPDAGs $G$ and $\hat{G}$. On the skeleton level, there can be edges in $G$ that are \emph{missing} in $\hat{G}$, and vice versa $\hat{G}$ can have \emph{extra} edges. For the CPDAG, we consider a directed edge to be \emph{correct} if it occurs with the same direction in both CPDAGs. For an undirected edge, it needs to be undirected in both CPDAGs to be considered correct. Any other edges that occur in both CPDAGs are considered to have \emph{wrong directions}. With these notions, we can calculate the False Discovery Rate (FDR) for the skeleton as $\frac{|\text{extra}|}{ |\hat{G}|}$, and for the CPDAG as $\frac{|\text{extra}|+|\text{wrong direction}|}{ |\hat{G}|}$. Here $|\text{extra}|$ is the number of extra edges, $|\hat{G}|$ is the number of edges in $\hat{G}$, and so on. 
To quantify the overall similarity and take into account the true positives, we calculate the Jaccard index (JI), which is $\frac{|\text{correct}| + |\text{wrong direction}|}{ |G \cup \hat{G}|} =
\frac{|\text{correct}| + |\text{wrong direction}|}{ |\text{missing}| + |\hat{G}|}$ for the skeleton, and $\frac{|\text{correct}|}{ |G|+|\hat{G}|-|\text{correct}|}$ for the CPDAG.

\subsection{Testing on Simulated Polytree Data}
\label{sec:simulated_polytree}
Here we briefly describe how we generate linear polytree SEMs. 
Additional implementation details can be found in \cref{sec:supp_simulated_polytree}.
First, we generate a polytree by randomly assigning directions to a random undirected tree. Next, the standardized SEM parameters $\beta_{ij}$'s are randomly chosen within a range, which in turn determine $\omega_{ii}$ (\cref{eq:noise_var}). Motivated by the theoretical results (\cref{thm:skeleton,thm:cpdag,thm:lower_bound}), we make sure that in the above procedures, the generated SEM satisfies $\rho_{\min}\le |\beta_{ij}|\le \rho_{\max}$, the maximum in-degree $d_{*}=d_{*}^0$, and $\omega_{\min} \le \omega_{ii}$ for all $i\in V$. Here $\rho_{\min},\rho_{\max},d_{*}^0,\omega_{\min}$ are pre-specified constants (values used are listed in \cref{fig:simulated_polytree_1} caption).

\Cref{fig:simulated_polytree_1,fig:simulated_polytree_2} show the performance for $p=100$ and $n$ ranging from 50 to 1000. We see that the Chow-Liu algorithm performs much better than hill climbing, and overall has an accuracy similar to or better than that of PC and early-stopping PC. At small sample sizes of less than 400, PC and early-stopping PC have a smaller FDR for skeleton recovery than Chow-Liu, but this is likely at the expense of the true positive rate, as reflected by the similar or lower JI of PC compared to Chow-Liu (Panels BD of \cref{fig:simulated_polytree_1,fig:simulated_polytree_2}).
At larger sample sizes, Chow-Liu has a better accuracy in recovering the CPDAG.
The early-stopping PC has a similar accuracy in recovering the skeleton as the regular PC and a better accuracy in recovering the CPDAG (Panels CD of \cref{fig:simulated_polytree_1,fig:simulated_polytree_2}). This is likely due to the algorithm only applying Meek's Rule 1 (which is all needed for polytree) to orient the edges (\cref{subsec:equi}). 
As $\rho_{\min}$ becomes smaller or as $d_{*}$ increases, the accuracy of Chow-Liu decreases, which is consistent with the theory (\cref{thm:skeleton,thm:cpdag,thm:lower_bound}). For hill climbing and PC, the accuracy is less affected by $\rho_{\min}$ or $d_{*}$ (\cref{fig:simulated_polytree_1} vs \cref{fig:simulated_polytree_2}).  
Interestingly, the running time of the PC algorithm is significantly affected by $d_{*}$: the running time increases 40 folds when $d_{*}$ changes from 10 to 20 (\cref{tab:runtime}), and the code may even fail to stop (running for more than 8 hours) when $d_{*}=40$ (data not shown). This phenomenon can be explained by the relationship between the maximal number of neighbors and the maximal number of iterations in the PC algorithm; see Proposition 1 of \cite{kalisch2007estimating}. 
On the other hand, Chow-Liu is significantly more favorable in terms of running time, similar to the early-stopped PC which avoids the issue of maximal number of neighbors above. The Chow-Liu algorithm is up to 80 times faster than the slowest alternative algorithm (\cref{tab:runtime}) and, importantly, has a running time that is constant across the SEM parameters (this is also true for all other experiments described later).

\begin{figure}[h!]
    \centering
    \includegraphics[width=0.95\textwidth]{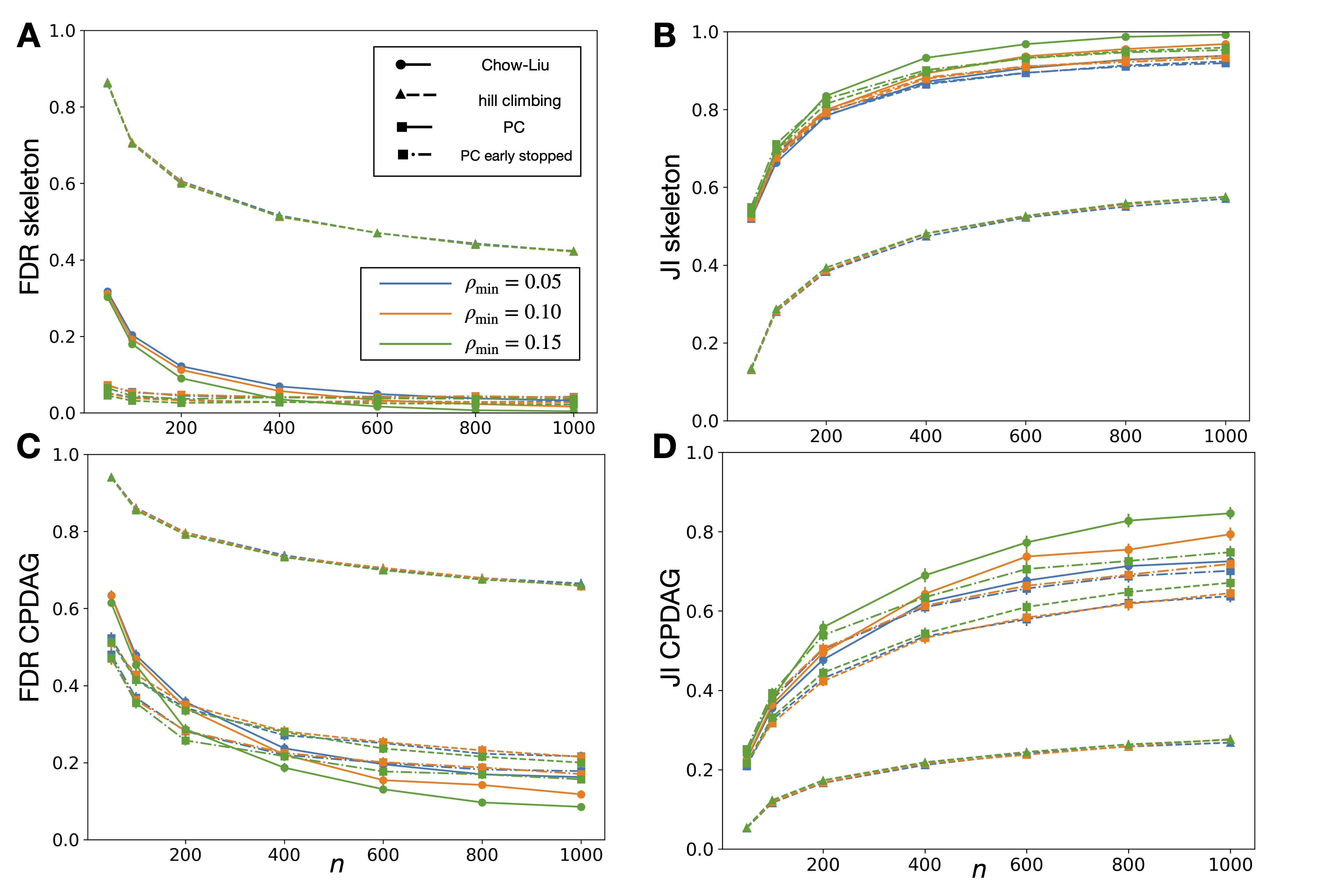}
    \caption{Performance on the polytree simulated data at $p=100$ and the maximum in-degree $d_{*}=10$. The results from the algorithms are represented by solid lines and dot markers (polytree), dash lines and triangle markers (hill climbing), solid lines and square markers (PC), and dash-dot lines and square markers (PC early stopped). Colors correspond to three different values of $\rho_{\min}$.  The rest of the SEM parameters are $\rho_{\max}=0.8$, and $\omega_{\min}=0.1$. 
    Panels A,C show the FDR (the smaller the better) for skeleton and CPDAG recovery. Panels B,D show the Jaccard Index (the larger the better).
    For each combination of SEM parameters, we randomly generate a polytree, the detailed generation of the $\beta_{ij}$'s and $\omega_{ii}$'s are described in \cref{sec:supp_simulated_polytree}. Then we draw iid samples from the SEM of different sizes (the x-axis, $n=50,100,200,400,600,800,1000$). This entire process is repeated 100 times.
    Each point on the curves shows the average over the 100 repeats and the error bars are 1.96 times the standard error of the mean (many are smaller than the marker).}
    \label{fig:simulated_polytree_1}
\end{figure}
\begin{figure}[h!]
    \centering
    \includegraphics[width=0.95\textwidth]{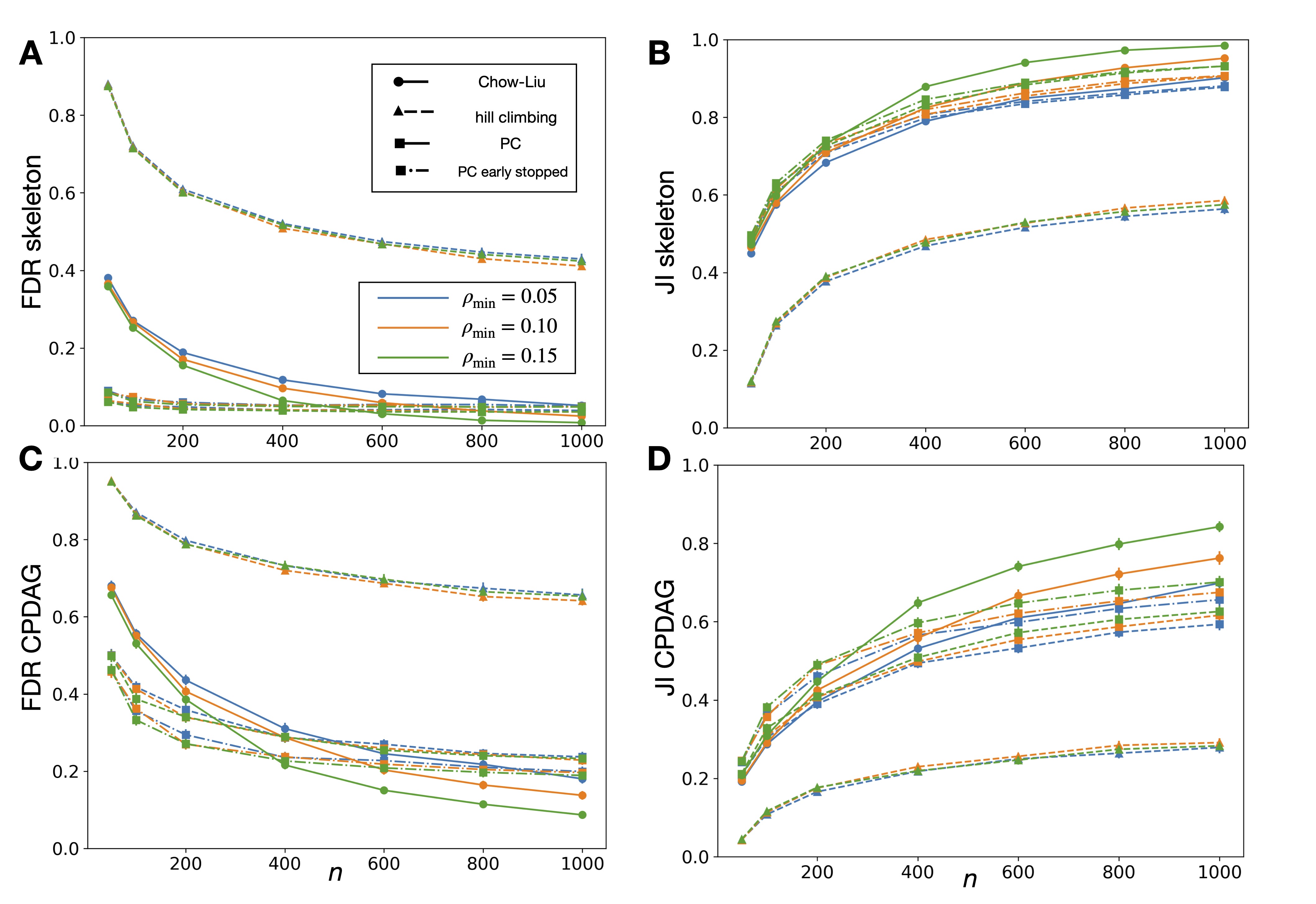}
    \caption{Same as \cref{fig:simulated_polytree_1} but for a maximum in-degree of $d_{*}=20$.}
    \label{fig:simulated_polytree_2}
\end{figure}

\subsection{Testing on DAG Benchmark Data}
\label{sec:benchmark}

The ALARM dataset \citep{Beinlich1989} is a widely used benchmark data. The true DAG (\cref{fig:alarm_data}) has 37 nodes and 46 edges. In fact, a three-phase algorithm initialized by Chow-Liu has been demonstrated to be effective on this data \citep{cheng2002learning}. We simply conducted the Chow-Liu algorithm (Algorithms \ref{alg:chow-liu} and \ref{alg:cpdag}), and found that it still performs better than hill climbing and PC (including the early-stopping version) in terms of the metrics (\cref{tab:alarm}) as well as intuitively by the inferred graph (\cref{fig:alarm_data}). 
At $n=5000$, it even achieves the best possible accuracy for skeleton recovery as Chow-Liu can achieve (there has to be at least $46-(37-1)=10$ edges missing in an inferred polytree).

\begin{table}[h!]
    \scriptsize
    \centering
    \begin{tabular}{l c c c c c c c c}
    \hline
      $n=500$  & Correct & Wrong d. & Missing & Extra & FDR sk. & JI sk. & FDR CPDAG & JI CPDAG\\
      \hline
    Chow-Liu & \textbf{28} & \textbf{4} & 14 & \textbf{4} & \textbf{0.11} & \textbf{0.64} & \textbf{0.22} & \textbf{0.52} \\
Hill climbing & 24 & 17 & \textbf{5} & 60 & 0.59 & 0.39 & 0.76 & 0.2 \\
PC & 14 & 17 & 15 & 13 & 0.3 & 0.53 & 0.68 & 0.18 \\
PC early stopped & 24 & 8 & 14 & 20 & 0.38 & 0.48 & 0.54 & 0.32 \\
    \hline
      $n=5000$   & Correct & Wrong d. & Missing & Extra & FDR sk. & JI sk. & FDR CPDAG & JI CPDAG\\
          \hline
    Chow-Liu & 25 & 11 & 10 & \textbf{0} & \textbf{0.0} & \textbf{0.78} & \textbf{0.31} & 0.44 \\
Hill climbing & 27 & 18 & \textbf{1} & 62 & 0.58 & 0.42 & 0.75 & 0.21 \\
PC & 24 & 17 & 5 & 12 & 0.23 & 0.71 & 0.55 & 0.32 \\
PC early stopped & \textbf{42} & \textbf{1} & 3 & 39 & 0.48 & 0.51 & 0.49 & \textbf{0.49} \\
    \hline
    \end{tabular}
    \caption{Performance on ALARM data. See text for the details of the accuracy measures: the number of correct, missing, extra, and wrong direction edges, FDR, and Jaccard index for skeleton and CPDAG. The best results across the algorithms are in bold.}
    \label{tab:alarm}
\end{table}

\begin{figure}[htb]
    \centering
    \includegraphics[width=0.95\textwidth]{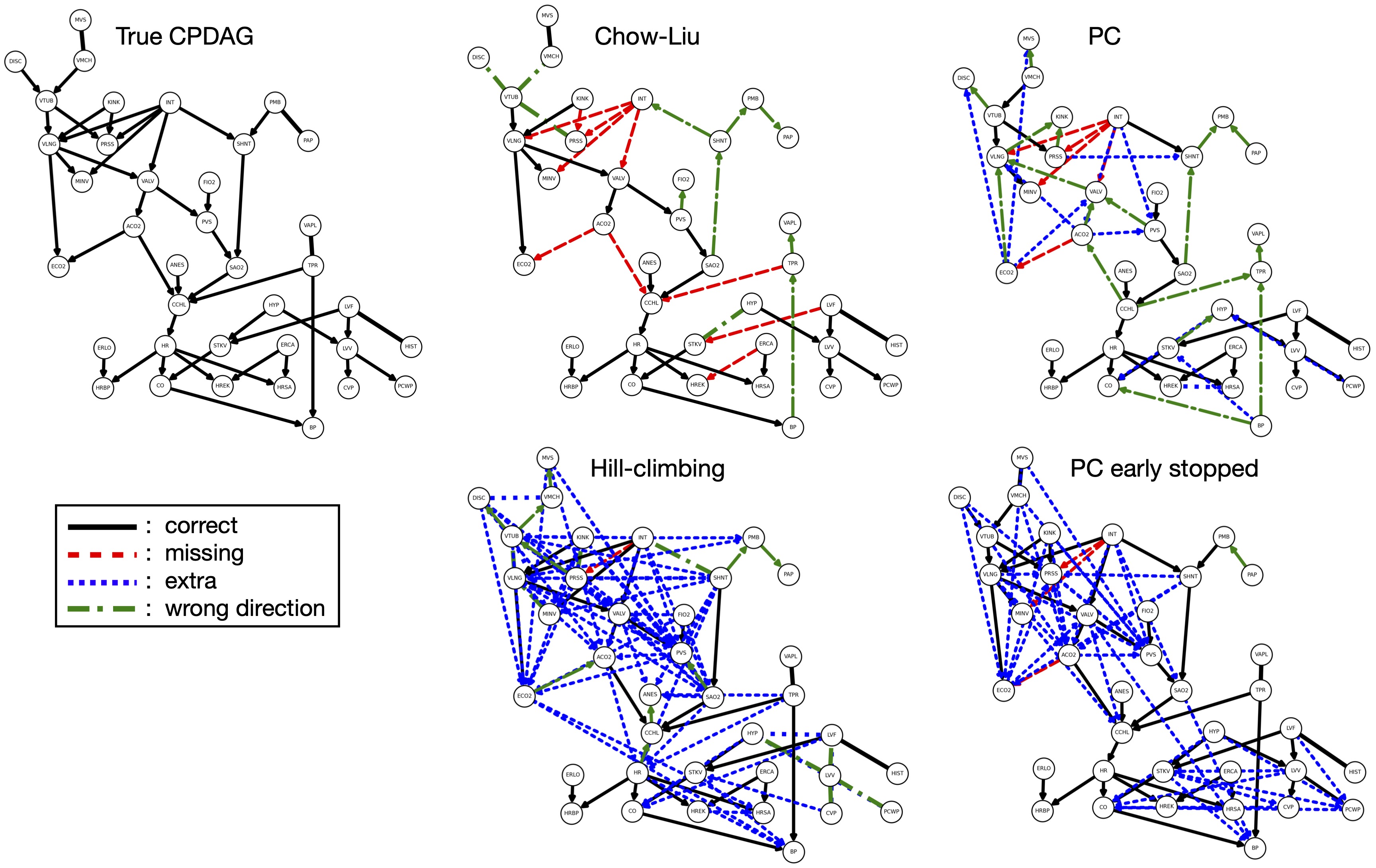}
    \caption{Comparing the true CPDAG of the ALARM data and the inferred one from the four algorithms at $n=5000$. There are 37 nodes and 46 edges in the true CPDAG.}
    \label{fig:alarm_data}
\end{figure}

Another benchmark we test is the ASIA dataset \citep{Lauritzen1988}, which is a simulated DAG dataset with eight nodes. Note that the ground truth is sparse but not exactly a polytree. At $n=500$ samples, the performance of Chow-Liu is comparable to that of hill climbing and PC algorithm, while the hill climbing gives the best result at $n=5000$ (\cref{tab:asia}). We illustrate the comparison intuitively by plotting the most likely inference outcome of each algorithm across the bootstrap trials in \cref{fig:asia_data} (where we re-sample $n$ observations from the original 5000 samples). Note the graph inferred by Chow-Liu (occurs at 23\%) is the best possible result it can achieve. This is because at least one edge must be missing as the output is a polytree, and the v-structure involving B, E, and D can no longer be identified once missing the edge ED, leading to BD being undirected.

\begin{table}[htb]
    \scriptsize
    \begin{center}
    \begin{tabular}{l c c c c }
    \hline
      $n=500$  & Correct & Wrong d. & Missing & Extra \\
      \hline
Chow-Liu & 4.0(1.1) & 1.8(1.25) & 2.2(0.6) & 1.2(0.6) \\
Hill climbing & \textbf{4.2(1.54)} & 2.3(1.35) & \textbf{1.5(0.67)} & 1.5(1.57) \\
PC & 3.3(1.19) & \textbf{1.7(0.9)} & 3.0(0.63) & \textbf{0.1(0.3)} \\
PC early stopped & 2.9(1.3) & 2.1(1.14) & 3.0(0.63) & 0.1(0.3) \\
    \hline
      $n=5000$  & Correct & Wrong d. & Missing & Extra \\
          \hline
Chow-Liu & 4.19(1.49) & 2.3(1.47) & 1.51(0.51) & 0.51(0.51) \\
Hill climbing & \textbf{6.96(0.9)} & \textbf{0.34(0.69)} & 0.7(0.57) & 0.87(0.93) \\
PC & 4.59(1.06) & 1.79(1.04) & 1.62(0.56) & \textbf{0.15(0.39)} \\
PC early stopped & 3.82(0.83) & 3.53(1.09) & \textbf{0.65(0.54)} & 1.05(0.57) \\
    \hline\\[0.1cm]
    \end{tabular}
    \begin{tabular}{l c c c c}
    (continue) \\
    \hline
      $n=500$  &  FDR sk. & JI sk. & FDR CPDAG & JI CPDAG\\
      \hline
Chow-Liu & 0.17(0.09) & 0.64(0.11) & 0.43(0.16) & 0.38(0.14) \\
Hill climbing & 0.17(0.14) & \textbf{0.7(0.14)} & 0.46(0.22) & \textbf{0.39(0.23)} \\
PC & \textbf{0.02(0.06)} & 0.62(0.09) & \textbf{0.36(0.23)} & 0.35(0.14) \\
PC early stopped & 0.02(0.06) & 0.62(0.09) & 0.44(0.25) & 0.3(0.16) \\
    \hline
      $n=5000$  & FDR sk. & JI sk. & FDR CPDAG & JI CPDAG\\
          \hline
Chow-Liu & 0.07(0.07) & 0.77(0.11) & 0.4(0.21) & 0.41(0.19) \\
Hill climbing & 0.1(0.1) & \textbf{0.83(0.11)} & \textbf{0.13(0.15)} & \textbf{0.78(0.18)} \\
PC & \textbf{0.02(0.05)} & 0.78(0.08) & 0.29(0.17) & 0.48(0.14) \\
PC early stopped & 0.12(0.06) & 0.82(0.08) & 0.54(0.11) & 0.31(0.09) \\
    \hline
    \end{tabular}
    \end{center}
    \caption{Performance on ASIA data. The accuracy measures (the number of correct, missing, extra, and wrong direction edges, FDR and Jaccard index for skeleton and CPDAG; see text) are averaged over 1000 bootstraps (resampling $n$ observations out of 100,000) and the standard deviations are in the parentheses. The best results across the algorithms are in bold.}
    \label{tab:asia}
\end{table}

\begin{figure}[htb]
    \centering
    \includegraphics[width=0.85\textwidth]{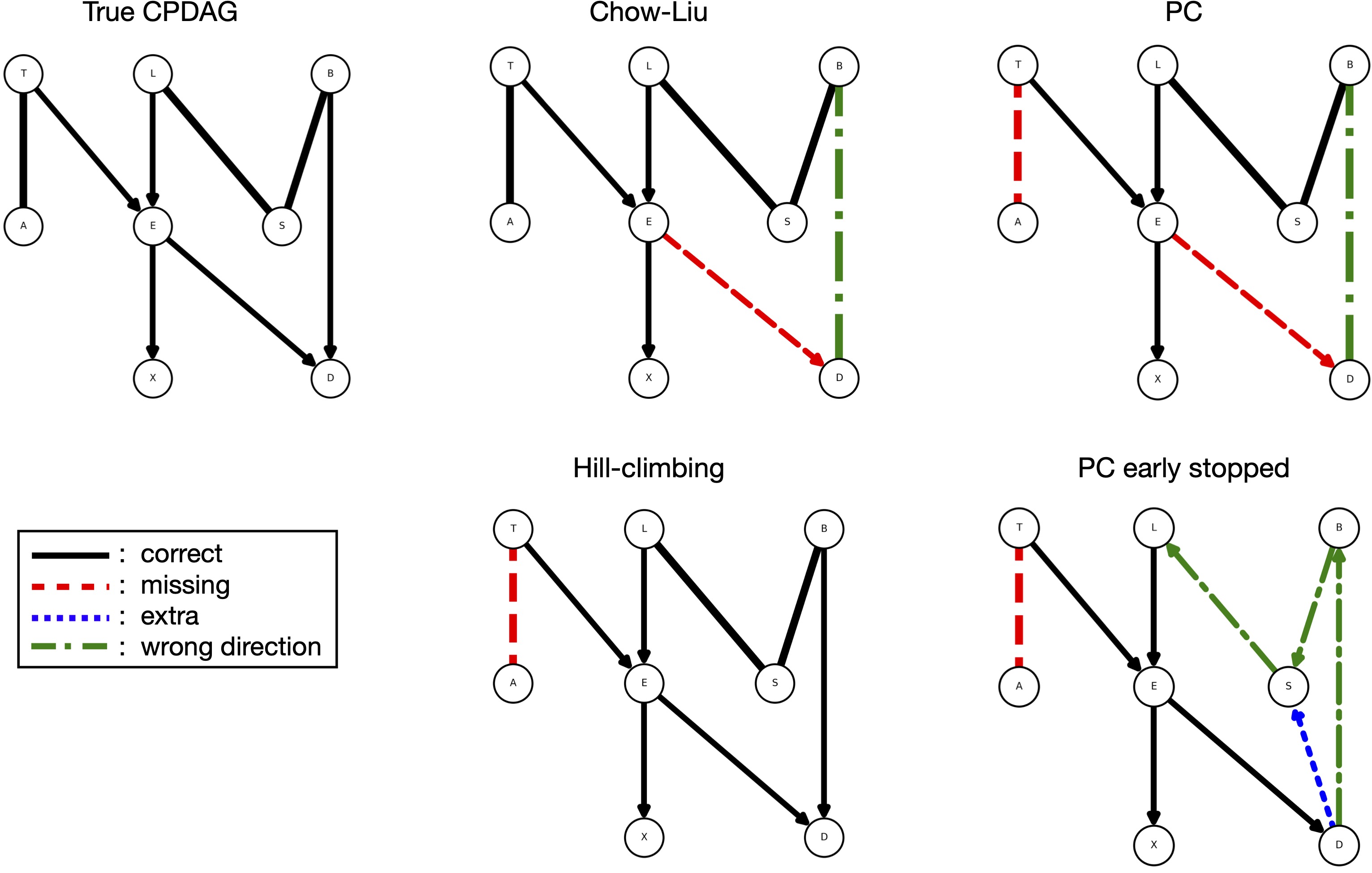}
    \caption{The true CPDAG and the typical inferred CPDAG for the ASIA data with $n=5000$ samples. We plot the most likely inferred graph across 1000 bootstraps for each algorithm, which occurs at 23\% (Chow-Liu), 44\% (hill climbing), 42\% (PC), 50\% (early-stopping PC), respectively.}
    \label{fig:asia_data}
\end{figure}

Lastly, we study a benchmark simulated dataset, EARTHQUAKE \citep{Korb2010}, whose ground truth graph is a polytree (\cref{fig:earthquake_data}).  At $n=500$ samples, the performance of Chow-Liu is comparable to that of hill climbing and PC algorithm, while Chow-Liu performs the best in the overall recovery of the skeleton and the CPDAG at $n=2000$ (FDR and Jaccard index in \cref{tab:earthquake}). 
Similar to previous data, we plot the most likely inference outcome for each algorithm across trials \cref{fig:earthquake_data}. At $n=2000$, the Chow-Liu algorithm perfectly recovers the true DAG in 90\% of trials. 

\begin{table}[htb]
    \scriptsize
    \begin{center}
    \begin{tabular}{l c c c c }
    \hline
      $n=500$  & Correct & Wrong d. & Missing & Extra \\
      \hline
Chow-Liu & 2.87(1.55) & 0.83(1.27) & 0.3(0.67) & \textbf{0.3(0.67)} \\
Hill climbing & \textbf{3.38(1.29)} & \textbf{0.46(1.07)} & \textbf{0.17(0.48)} & 0.85(0.75) \\
PC & 2.52(1.45) & 1.16(1.13) & 0.32(0.56) & 0.66(0.64) \\
PC early stopped & 2.63(1.53) & 1.09(1.22) & 0.28(0.57) & 0.76(0.74) \\
    \hline
      $n=2000$  & Correct & Wrong d. & Missing & Extra \\
          \hline
Chow-Liu & 3.62(1.17) & 0.38(1.16) & 0.01(0.08) & \textbf{0.01(0.08)} \\
Hill climbing & \textbf{3.88(0.64)} & \textbf{0.12(0.64)} & \textbf{0.0(0.0)} & 0.61(0.61) \\
PC & 3.86(0.47) & 0.14(0.47) & 0.0(0.0) & 0.62(0.61) \\
PC early stopped & 3.68(0.78) & 0.32(0.78) & 0.0(0.0) & 0.79(0.79) \\
    \hline\\[0.1cm]
    \end{tabular}
    \begin{tabular}{l c c c c}
    (continue) \\
    \hline
      $n=500$  &  FDR sk. & JI sk. & FDR CPDAG & JI CPDAG\\
      \hline
Chow-Liu & \textbf{0.08(0.17)} & \textbf{0.89(0.22)} & 0.28(0.39) & 0.68(0.4) \\
Hill climbing & 0.17(0.14) & 0.81(0.17) & \textbf{0.26(0.3)} & \textbf{0.72(0.31)} \\
PC & 0.14(0.13) & 0.81(0.18) & 0.43(0.33) & 0.51(0.35) \\
PC early stopped & 0.15(0.14) & 0.8(0.18) & 0.42(0.35) & 0.54(0.36) \\
    \hline
      $n=2000$  & FDR sk. & JI sk. & FDR CPDAG & JI CPDAG\\
          \hline
Chow-Liu & \textbf{0.0(0.02)} & \textbf{1.0(0.04)} & \textbf{0.08(0.27)} & \textbf{0.91(0.27)} \\
Hill climbing & 0.13(0.11) & 0.87(0.11) & 0.16(0.19) & 0.84(0.19) \\
PC & 0.13(0.11) & 0.87(0.11) & 0.16(0.14) & 0.84(0.16) \\
PC early stopped & 0.17(0.14) & 0.83(0.14) & 0.24(0.24) & 0.74(0.26) \\
    \hline
    \end{tabular}
    \end{center}
    \caption{Performance on EARTHQUAKE data. The accuracy measures (the number of correct, missing, extra, and wrong direction edges, FDR and Jaccard index for skeleton and CPDAG; see text) are averaged over 1000 bootstraps (resampling $n$ observations from a total of 100,000 samples) and the standard deviations are in the parentheses. The best results across the four algorithms are in bold.}
    \label{tab:earthquake}
\end{table}

\begin{figure}[h!]
    \centering
    \includegraphics[width=0.85\textwidth]{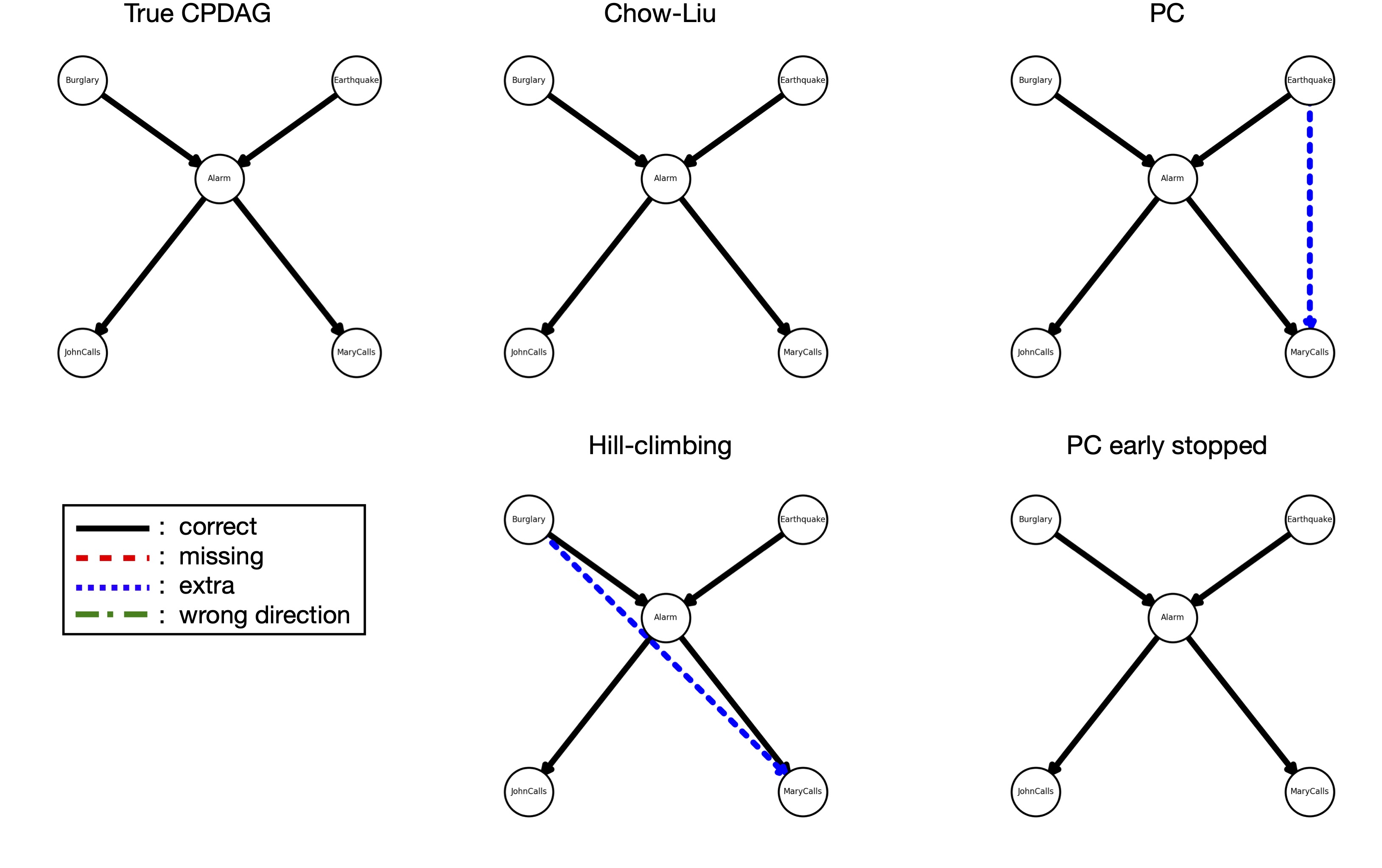}
    \caption{The true CPDAG and the most frequently inferred CPDAG for the EARTHQUAKE data with $n=2000$ samples over 1000 trials. The graph shown occurs at 90\% for Chow-Liu, 47\% for hill climbing, 46\% for PC, and 41\% for early-stopping PC, respectively.}
    \label{fig:earthquake_data}
\end{figure}

\begin{table}[h!]
    \scriptsize
    \centering
    \begin{tabular}{l c c c c}
    \hline
    (Unit: sec)   & \begin{tabular}{@{}c@{}}Polytree $p=100$, \\$d^{in}_{\max}=10$ \end{tabular}
    &\begin{tabular}{@{}c@{}}Polytree $p=100$, \\$d^{in}_{\max}=20$ \end{tabular}
    & ASIA $p=8$ & ALARM $p=37$ \\
    \hline
    Chow-Liu    &   \textbf{0.01} & \textbf{0.01} & \textbf{0.0003} & \textbf{0.01}\\
    Hill climbing & 0.87 & 1.00  & 0.012 & 1.48  \\
    PC             & 0.07 & 2.86  & 0.03 & 0.53 \\
    PC early stopped       & 0.03 & 0.03  & 0.001 & 0.38 \\
    \hline
    \end{tabular}
    
    \caption{Running time comparison. The columns correspond to the SEM data in \cref{fig:simulated_polytree_1,fig:simulated_polytree_2} (polytree), \cref{tab:alarm} (ALARM) and \cref{tab:asia} (ASIA).
    $n=5000$ for the AISA and ALARM. The running time is for one inference (averaged across trials/bootstraps when applicable). All computation is done on a 2019 Intel i7 quad-core CPU desktop computer.}
    \label{tab:runtime}
\end{table}

\subsection{Details on polytree data generation}
\label{sec:supp_simulated_polytree}
In simulated polytree data, we draw i.i.d. samples from a Gaussian linear SEM with a polytree structure. First, we generate an undirected tree with $p$ nodes from a random Prufer sequence. The Prufer sequence which has a one-to-one correspondence to all the trees with $p$ nodes is obtained by sampling $p-2$ numbers with replacement from $\{1,2,\ldots,p\}$. Next, a polytree is obtained by randomly orienting the edges of the undirected tree. We also ensure that one of the nodes has a specified large in-degree $d_{*}^0$. This is done by making a node $i$ occur at least $d_{*}^0-1$ times in the Prufer sequence, so the node will have a degree at least $d_{*}^0$ in the undirected tree. We then orient all edges connected to $i$ by selecting $d_{*}^0$ of them to be incoming edges. The rest of the edges in the tree are oriented randomly as before.

In the next step, we choose the value of the standardized $\beta_{ij}$ corresponding to the correlations. Note that once $\beta_{ij}$'s are given, $\omega_{ii}$ are determined by \cref{eq:noise_var}.
Motivated by the theoretical conditions on $n,p$ such as those in Theorems \ref{thm:skeleton} and \ref{thm:cpdag}, we choose $\beta_{ij}$ according to some pre-specified values $\rho_{\min}$ and $\rho_{\max}$ and study the effects of these parameters on the recovery accuracy. To avoid ill-conditioned cases, we require that $\omega_{ii}\ge \omega_{\min}$, where $\omega_{\min}$ is another parameter. This adds constraints on $\beta_{ij}$, $\sum_{j=1}^p\beta_{ij}^2 \le 1- \omega_{\min}$, in addition to $\rho_{\min}\le |\beta_{ij}|\le \rho_{\max}$. We sample $\beta_{ij}^2$ uniformly among the set of non-negative values satisfying the above inequality constraints. This sampling is implemented by drawing $\beta_{ij}^2$, (corresponding to all the edges in the polytree) sequentially in a random order as $\min(\rho_{\max}^2,\rho_{\min}^2+v_j x)$, where $x$ is drawn from the beta distribution $B(1,\tilde{d}^{\text{in}}_j)$. Here $\tilde{d}^{\text{in}}_j$ is the number of incoming edges to node $j$ whose $\beta_{ij}^2$ has not yet been chosen, and $v_j = 1 - \omega_{\min} - d^{\text{in}}_j \rho_{\min}^2 - \sum_{k} \beta_{kj}^2$, where the sum is over all edges $k\rightarrow j$ whose $\beta_{kj}^2$ have already been chosen, $d^{\text{in}}_j$ is the total number of incoming edges to $j$. The use of beta distribution here is based on the fact of the order statistics of independent uniformly distributed random variables. As an exception, we first set two $|\beta_{ij}|$ values to attain equality in the constraints by $\rho_{\min}$ and $\rho_{\max}$ before choosing the rest of $\beta_{ij}$'s according to the above sampling procedure. For $\rho_{\max}$, we randomly choose a node $i$ that satisfies $\rho_{\min}^2 (d^{\text{in}}_i-1)+\rho_{\max}^2 \le 1-\omega_{\min}$, $d^{\text{in}}_i>0$ (always exists if $\rho_{\max}^2+\omega_{\min} \le 1$ and the minimum nonzero in-degree is 1), and set one of its incoming edges to have $|\beta_{ji}|=\rho_{\max}$. For $\rho_{\min}$, we choose a node among the rest of nodes with $d^{\text{in}}_k>0$ and set $|\beta_{lk}|=\rho_{\min}$ for one of its incoming edges. 
Lastly, a positive or negative sign is given to each $\beta_{ij}$ with equal probability. After the $\beta_{ij}$'s (i.e., matrix $\mtx{B}$) are chosen (and hence $\mtx{\Omega}$), the samples $\vct{x}_1, \ldots, \vct{x}_n$ are drawn according to $\vct{x} = (\mtx{I} - \mtx{B})^{-\top} \vct{\epsilon}$, where $\vct{\epsilon}$ are zero mean Gaussian variables with covariance $\mtx{\Omega}$.

\section{Discussions}
\label{sec:discussions}
This paper studies the problem of polytree learning, a special case of DAG learning where the skeleton of the directed graph is a tree. This model has been widely used in the literature for both prediction and structure learning. We consider the linear polytree model, and consider the Chow-Liu algorithm \citep{chow1968approximating} that has been proposed in \cite{rebane1987recovery} for polytree learning. Our major contribution in this theoretical paper is to study the sample size conditions under which the polytree learning algorithm recovers the skeleton and the CPDAG exactly. Under certain mild assumptions on the correlation coefficients over the polytree skeleton, we show that the skeleton can be exactly recovered with high probability if the sample size satisfies $n > O((\log p)/\rho_{\min}^2)$, and the CPDAG of the polytree can be exactly recovered with high probability if the sample size satisfies $n > O((\log p)/\rho_{\min}^4)$. We also establish necessary conditions on sample size for both skeleton and CPDAG recovery, which are consistent with the sufficient conditions and thereby give a sharp characterization of the difficulties for these two tasks. In addition, we also study inverse correlation matrix estimation under the linear polytree SEM. Under the component-wise $\ell_1$ metric, we give an estimation error bound that is characterized by the dimension, the sample size, and the total number of v-structures.

There are a number of remaining questions to study in the future. It would be interesting to study how to relax the polytree assumption. In fact, the benchmark data analysis (\cref{sec:benchmark}) is very insightful, since it shows that the considered Chow-Liu based CPDAG recovery algorithm, which seemingly relies heavily on the polytree assumption, could lead to reasonable and accurate structure learning result when the ground truth deviates from a polytree to some degree. This inspires us to consider the robustness of the proposed approach against such structural assumptions. For example, if the ground truth can only be approximated by a polytree, can the structure learning method described in Sections \ref{subsec:skeleton} and \ref{subsec:equi} lead to an approximate recovery of the ground truth CPDAG with theoretical guarantees? Similarly, if the sample size is not large enough and the CPDAG is thereby unable to be recovered exactly, can we still obtain an accurate estimate of the inverse correlation matrix? As aforementioned, polytree modeling is usually used in practice only as initialization, and post-processing could give better structural recovery results. A well-known method of this type is given in \cite{cheng2002learning} without theoretical guarantees. An interesting future research direction is to include such post-processing steps into our theoretical analysis, such that our structural learning results (e.g., Theorems \ref{thm:cpdag}) hold for more general sparse DAGs.

\section*{Acknowledgment}
X. Li and X. Lou acknowledge support from the NSF via the Career Award DMS-1848575. Y. Hu acknowledges support from a HKUST start-up fund. We would like to thank J. Peng for helpful discussions.

\bibliographystyle{apalike} 
\bibliography{ref}

\end{document}